\theoremstyle{plain}
\newtheorem{theorem}{Theorem} 
\newtheorem{lemma}[theorem]{Lemma}
\newtheorem{proposition}[theorem]{Proposition}
\newtheorem{definition}[theorem]{Definition}
\newcommand{\bitem}{\begin{itemize}}
\newcommand{\eitem}{\end{itemize}}
\newcommand{\mc}[1]{\mathcal{#1}}
\newcommand{\N}{\mathbb{N}}
\newcommand{\R}{\mathbb{R}}
\newcommand{\EE}{\mathbb{E}}
\newcommand{\PP}{\mathbb{P}}
\newcommand{\bpm}{\begin{pmatrix}}
\newcommand{\epm}{\end{pmatrix}}
\newcommand{\bvm}{\begin{vmatrix}}
\newcommand{\evm}{\end{vmatrix}}
\newcommand{\bsm}{\left(\begin{smallmatrix}}
\newcommand{\esm}{\end{smallmatrix}\right)}
\newcommand{\ol}[1]{\overline{#1}}
\newcommand{\wh}[1]{\widehat{#1}}
\newcommand{\wt}[1]{\widetilde{#1}}
\newcommand{\zp}{{v}}
\newcommand{\yp}{{w}}
\newcommand{\dd}{\textup{d}}
\newcommand{\eins}{\mathbf{1}}
\DeclareMathSymbol{\mydiv}{\mathbin}{symbols}{"04}
\DeclareMathOperator{\KL}{KL}
\def\widebreve{\mathpalette\wide@breve}
\def\wide@breve#1#2{\sbox\z@{$#1#2$}%
     \mathop{\vbox{\m@th\ialign{##\crcr
\kern0.08em\brevefill#1{0.8\wd\z@}\crcr\noalign{\nointerlineskip}%
                    $\hss#1#2\hss$\crcr}}}\limits}
\def\brevefill#1#2{$\m@th\sbox\tw@{$#1($}%
  \hss\resizebox{#2}{\wd\tw@}{\rotatebox[origin=c]{90}{\upshape(}}\hss$}
\title{On Certified Generalization in Structured Prediction}
\author[B.~Boll, C.~Schn\"{o}rr]{Bastian Boll, Christoph Schn\"{o}rr}
\address[B.~Boll]{Image \& Pattern Analysis Group, Heidelberg University, Germany}
\urladdr{\url{https://ipa.math.uni-heidelberg.de}}
\address[C.~Schn\"{o}rr]{Image \& Pattern Analysis Group, Heidelberg University, Germany} 
\urladdr{\url{https://ipa.math.uni-heidelberg.de}}
\begin{document}

\maketitle

\begin{abstract}
In structured prediction, target objects have rich internal structure which does not factorize into independent components and violates common i.i.d. assumptions. This challenge becomes apparent through the exponentially large output space in applications such as image segmentation or scene graph generation.
We present a novel PAC-Bayesian risk bound for structured prediction wherein the rate of generalization scales not only with the number of structured examples but also with their size.
The underlying assumption, conforming to ongoing research on generative models, is that data are generated by the Knothe-Rosenblatt rearrangement of a factorizing reference measure. This allows to explicitly distill the structure between random output variables into a Wasserstein dependency matrix. 
Our work makes a preliminary step towards leveraging powerful generative models to establish generalization bounds for discriminative downstream tasks in the challenging setting of structured prediction.
\end{abstract}

\maketitle
\tableofcontents

\section{Introduction}\label{sec:introduction}

\subsection{Overview}
\textit{Structured prediction} is the task of predicting an output which itself contains internal structure. As an example, consider the problem of image segmentation. The output to be predicted is an assignment of semantic classes to each image pixel. However, the segmentation problem is not merely a pixelwise classification, because each pixel is not independently assigned a semantic class. If two pixels are adjacent in the image plane, it is more likely that they belong to the same class than to different ones. A way to remedy this problem is to enumerate all possible segmentations of an image and treat the problem as classification. However, the output space of this classification is now exponentially large. This exponential (in the number of pixels) size of the output space indicates a much more difficult problem than the unstructured case of classification. In fact, it also presents an immediate challenge for any statistical learning theory which requires assumptions on the number of output classes \cite{Mustafa:2021}.

From a practical perspective, structured prediction problems present a challenge of label aquisition. For instance, dense manual segmentation of an image is significantly more labour intensive than manual classification. In turn, one would expect that pixelwise segmentation of an image contains much richer information to be exploited in supervised learning. However, this is not represented in typical statistical learning theory which assumes all data to be independently drawn from the true underlying distribution. In the extreme case of only a single structured example being available for training, these statistical learning theories can not make any meaningful statement on generalization. 

Addressing this point in particular, \cite{London:2016} presents an analysis of the dependency structure in the output and proves a risk certificate which decays in both the number of structured examples $m$ \textit{and their size} $d$. Refering back to the example of image segmentation, this amounts to a high-probability bound on the fraction of misslabeled pixels which \textit{decays with the number of labeled pixels} observed during training as opposed to merely the number of segmented images.

At the core of statistical learning theory lies the concentration of measure phenomenon which posits that a stable function of a large number of weakly dependent random variables will take values close to its mean \cite{Ledoux:2001aa,Boucheron:2013}. This is relevant because model risk, the expected loss on unseen data, is the mean of empirical risk under the draw of the sample. Learning theories can thus be built on concentration of measure results.
In the present work, we focus on PAC-Bayesian learning theory \cite{Shawe:1997, McAllester:1999, McAllester:1999pac}. For an overview of PAC-Bayesian theory we refer to \cite{Catoni:2007aa, Guedj:2019tu, Alquier:2021}. 

In addition to a concentration result such as a moment-generating function (MGF) bound, PAC-Bayesian arguments employ a change of measure, typically via Donsker and Varadhan's variational formula. 
In particular, stochastic predictors, i.e. distributions over a hypothesis class of models are considered. A core objective is to construct generalization bounds which hold uniformly over all stochastic predictors called \emph{PAC-Bayes posteriors}. Model complexity is then measured as relative entropy to a reference stochastic predictor called \emph{PAC-Bayes prior}. As this terminology alludes to, the PAC-Bayes posterior is informed by more data than the PAC-Bayes prior. Note however, that PAC-Bayesian theory generalizes Bayesian theory because prior and posterior are not typically connected via likelihood.

PAC-Bayesian theory has gained considerable attention in recent years due to the demonstration of non-vacuous \emph{risk certificates in deep learning} \cite{Dziugaite:2018wc}. Since then, a line of research has succeeded in tightening the risk bounds of deep classifiers \cite{Perez:2021, Perez:2021a, Clerico:2022}. In addition, multiple authors have studied ways to weaken underlying assumptions of bounded loss \cite{Haddouche:2021, Guedj:2021} and i.i.d. data \cite{Alquier:2018}.

The majority of recent works in this field focuses on classification or regression. \textit{Structured} prediction has received comparatively little recent attention, an exception being the work \cite{Cantelobre:2020} which provides a PAC-Bayesian perspective on the implicit loss embedding framework for structured prediction \cite{Ciliberto:2020}.

\subsection{Related Work}
Here, we continue a line of research started by \cite{London:2013a, London:2014, London:2016} which aims to construct PAC-Bayesian risk bounds for structured prediction that account for generalization from a single (but large) structured datum. Instrumental to their analysis is the stability of inference and quantified dependence in the data distribution. The latter is expressed in terms of $\vartheta$-mixing coefficients, the total variation distance between data distributions conditioned on fixed values of a subset of variables. For structured prediction with Hamming loss, a coupling of such conditional measures can be constructed \cite{Fiebig:1993} such that $\vartheta$-mixing coefficients yield an upper-bound that allows to invoke concentration of measure arguments \cite{Kontorovich:2008}. The result is an MGF bound which the authors employ in a subsequent PAC-Bayesian construction -- achieving generalization from a single datum.

The underlying assumption of these previous works is that data are generated by a Markov random field (MRF). This model assumption is somewhat limiting because Markov properties, certain conditional independences, likely do not hold for many real-world data distributions. In addition, MRFs are difficult to work with computationally. Exact inference in MRFs is NP-hard \cite{Wainwright:2008aa} and thus learning, which often contains inference as a subproblem, presents significant computational roadblocks. Even once an MRF has been found which represents data reasonably well, numerical evaluation of the PAC-Bayesian risk certificate proposed in \cite{London:2016} will again be computationally difficult.

Nevertheless, we share the sentiment of previous authors that some assumption on the data-generating process is required in structured prediction. This is because conditional data distributions, the distribution of data conditioned on a fixed set of values for a subset of variables, are central to establishing concentration of measure via the martingale method \cite{Kontorovich:2017}. Consider again the example of image segmentation. Once we have fixed a sufficiently large number of pixels to arbitrary values (and class labels), even a large dataset will not contain an abundance of data which match these values and thus provide statistical power to learn the conditional distribution. This problem is well-known in conditional density estimation \cite{Trippe:2018}.

\subsection{Contribution, Organization}
In this work, we propose to instead assume a triangular and monotone transport, a \emph{Knothe-Rosenblatt (KR) rearrangement} \cite{Knothe:1957, Rosenblatt:1952, Carlier:2010aa, Bonnotte:2013aa, Marzouk:2016} of a reference measure as data model. This choice is attractive for multiple reasons. First, any data distribution which does not contain atoms can be represented uniquely in this way \cite{Bogachev:2005} which should suffice to represent many distributions of practical interest. With regard to conditional distributions, the KR-rearrangement has the convenient property that conditioning on a fixed value for a subset of variables can again be represented by KR-rearrangement. We will use this property in our construction of coupling measures between conditional distributions. 

Specifically,
\begin{itemize}
\item
We present a novel PAC-Bayesian risk bound for structured prediction wherein the rate of generalization scales not only with the number of structured examples but also with their size.
\item 
Based on data generated by KR-rearrangement of a tractable reference measure, we distill relevant structure of the data distribution into a Wasserstein dependency matrix. 
Our analysis hinges on state-of-the-art results in concentration theory \cite{Kontorovich:2017} which serve to bound moment-generating functions by properties of the Wasserstein dependency matrix. We subsequently invoke a PAC-Bayesian argument to derive the desired risk certificate. 
\item
We also propose to leverage a construction of bad input data as a computational tool to find entries of the Wasserstein dependency matrix.
\end{itemize}
We stress the fact that many established approaches to generative modelling can be seen as instances of measure transport. For instance, it includes normalizing flows \cite{Tabak:2010, Tabak:2013, Kobyzev:2019aa, Papamakarios:2021vu, Ruthotto:2021tv}, diffusion models \cite{Song:2021aa, Ho:2022aa}, generative adversarial networks and variational autoencoders \cite{Bousquet:2017aa, Genevay:2017aa}. While most measure transport models which currently enjoy empirical success are not KR-rearrangements, we hope that the methods presented here can lay the foundation of leveraging powerful generative models to build risk certificates for discriminative downstream tasks.

\paragraph{Organization}
The central concepts of concentration theory and measure transport are described in the preliminary Sections~\ref{sec:concentration_learning} and~\ref{sec:measure_transport}.
The main results are presented in Section~\ref{sec:pac_bayes}.
In Section~\ref{sec:bad_set_bound} we present a first discussion of computational aspects related to the presented framework. The paper closes on a discussion of limitations in Section~\ref{sec:limitations} and a conclusion in Section~\ref{sec:conclusion}.

\paragraph{Basic Notation}
For any $d\in \N$, denote $[d] = \{1,\dotsc,d\}$. If $z\in\mc{Z}^d$ is a vector, we refer to the subvector of entries with index in a set $I\subseteq [d]$ as $z^I$. In particular, index sets of interest will be half-open and closed intervals $(i,d]\subseteq [d]$ and $[i,d]\subseteq [d]$. Analogously, we will index the output of vector-valued functions $f^I$ and marginal measures $\mu^I$. For a set $\mc{B}\subseteq \mc{Z}^d$, we denote its complement in $\mc{Z}^d$ by $\mc{B}^c = \mc{Z}^d\setminus \mc{B}$ and for a measure $\mu$ on $\mc{Z}^d$, we denote the conditional measure given $\mc{B}^c$ as $\mu|\mc{B}^c$. If $Z$ is a random variable with distribution $\mu$ on $\mc{Z}^d$ and $I,J\subseteq [d]$ are disjoint index sets with $I\cup J = [d]$, we denote the conditional law of $Z^{I}$ given $Z^J = z^J$ as $\mu(dz^I|z^J)$.
\section{Concentration of Measure and Generalization}\label{sec:concentration_learning}

Let $\mc{X}$ denote an input space and $\mc{Y}$ denote an output space. Let $\mu$ be a distribution on $\mc{Z}^d = (\mc{X}\times \mc{Y})^d$. There are two restrictions inherent to this setup. First, an input is always paired with an output and thus the number of inputs needs to match the number of outputs. Second, all structured data will be drawn from $\mu$ and thus the size of each structured datum will be the same. Otherwise, $\mc{X}$ and $\mc{Y}$ can in principle be arbitrary sets which admit metrics.
For concreteness, think of $\mc{X} = [0,1]\subseteq \R$ as being a set of gray values and $\mc{Y} = \R$ containing signed distances from a semantic boundary \cite{Osher:2004} in an image with $d$ pixels. In this case, $\mc{Z}^d$ contains all binary segmentations of grayvalue images.

The goal of learning is to find parameters $\theta$ which define a predictor $\phi_\theta\colon \mc{X}^d\to\mc{Y}^d$ such that the \emph{risk}
\begin{equation}\label{eq:def_risk}
  \mc{R}(\theta) = \EE_{(X,Y)\sim \mu}[L(\phi_\theta(X), Y)]
\end{equation}
with respect to some loss function $L\colon \mc{Y}^d\times \mc{Y}^d\to \R$ is minimized. However, the true risk \eqref{eq:def_risk} is typically intractable because $\mu$ is unknown. 
A related tractable quantity is the \emph{empirical risk}
\begin{equation}\label{eq:def_emprisk}
  \mc{R}_m(\theta, \mc{D}_{m}) = \frac{1}{m}\sum_{k\in [m]} L(\phi_\theta(X^{(k)}), Y^{(k)})
\end{equation}
of a sample $\mc{D}_{m} = (X^{(k)}, Y^{(k)})_{k\in [m]}$ drawn from $\mu^m$.
We further assume that the loss of structured outputs is the mean of bounded \emph{pointwise} loss $\ell\colon \mc{Y}\times\mc{Y}\to [0,1]$
\begin{equation}
  L(\wt y^{(k)}, y^{(k)}) = \frac{1}{d}\sum_{i\in [d]} \ell(\wt y^{(k)}_i, y^{(k)}_i)\ .
\end{equation}

In PAC-Bayesian constructions, we consider stochastic predictors $\zeta$, i.e. measures on a hypothesis space $\mc{H}$ of predictors $\phi_\theta\colon \mc{X}^d\to\mc{Y}^d$ as identified with measures on the underlying parameter space from which $\theta$ is selected. 
We then take the above notions of risk and empirical risk in expectation over parameter draws
\begin{equation}
  \mc{R}(\zeta) = \EE_{\theta\sim \zeta}[\mc{R}(\theta)],\qquad
  \mc{R}_m(\zeta, \mc{D}_{m}) = \EE_{\theta\sim \zeta}[\mc{R}_m(\theta, \mc{D}_{m})]\ .
\end{equation}

The expected value of empirical risk with respect to the sample is the true risk. For this reason, a central tool for the study of generalization is the \emph{concentration of measure} phenomenon. Informally, it states that a stable function of many weakly dependent random variables concentrates on its mean. 
We will invoke a line of reasoning put forward in \cite{Kontorovich:2017} and propose a novel approach to structured prediction based on the measure-transport framework outlined in Section~\ref{sec:introduction}.
To this end, we first define the following formal notions of \emph{stability} and \emph{dependence}. 

Let $\rho$ be a metric such that $\mc{Z}$ has finite diameter
\begin{equation}\label{eq:diam_rho}
  \|\rho\| = \sup_{\xi, \xi'\in \mc{Z}} \rho(\xi, \xi') < \infty
\end{equation}
and let $\rho^d(z, z') = \sum_{i\in [d]} \rho(z_i, z_i')$ denote the corresponding product metric on $\mc{Z}^d$.
\begin{definition}[\textbf{Local oscillation}]
Let $f\colon \mc{Z}^d\to \R$ be Lipschitz with respect to $\rho^d$. Then the quantities
\begin{equation}
  \delta_i(f) = \sup_{z, z'\in \mc{Z}^d, z'_{[d]\setminus \{i\}} = z_{[d]\setminus \{i\}}} \frac{|f(z) - f(z')|}{\rho(z_i, z'_i)},\qquad i\in [d]
\end{equation}
are called the local oscillations of $f$.
\end{definition}

The vector of local oscillations gives a granular account of stability.
In order to discuss interdependence of data in a probability space $(\mc{Z}^d, \mu, \Sigma)$, define the Markov kernels
\begin{equation}\label{eq:def_markov_kernel}
  K^{(i)}(z,d\yp) = \delta_{z^{[i-1]}}(d\yp^{[i-1]})\otimes \mu^{[i,d]}(d\yp^{[i,d]}|z^{[i-1]}),\qquad i\in [d]
\end{equation}
as well as $K^{(d+1)}(z, d\yp) = \delta_{z}(d\yp)$ and their action on functions
\begin{equation}\label{eq:markov_kernel_action}
K^{(i)}f(z) = \int f(y)K^{(i)}(z,d\yp) 
  = \int f(z^{[i-1]}\yp^{[i,d]})\mu^{[i,d]}(d\yp^{[i,d]}|z^{[i-1]})
\end{equation}
where in the edge case $i = 1$, the condition on $z^{[i-1]} = z^{\{\}}$ is removed.
Here $K^{(i)}(z,d\yp)$ is a Borel measure for every $z$ and \eqref{eq:markov_kernel_action} computes the expected value of $f$ at $z$, conditioned on the fixed realization of the subvector $z^{[i-1]}$.
It turns out that the effect of the kernel \eqref{eq:def_markov_kernel} on local oscillations serves to quantify dependence of data with joint distribution $\mu$. 
\begin{definition}[\textbf{Wasserstein matrix}]
For $i\in [d+1]$, let $K^{(i)}$ denote the Markov kernel \eqref{eq:markov_kernel_action}. A matrix $V^{(i)}\in \R^{d\times d}_{\geq 0}$ is called a Wasserstein matrix \cite{Foellmer:1979} for $K^{(i)}$, if
\begin{equation}
  \delta_k(K^{(i)}f) \leq \sum_{j\in [d]} V^{(i)}_{kj}\delta_j(f),\qquad \forall\, k\in [d]
\end{equation}
for any function $f\colon \mc{Z}^d\to \R$ which is Lipschitz with respect to $\rho^d$.
\end{definition}
The two concepts defined above will be used in Section~\ref{sec:pac_bayes} to construct a moment-generating function bound via the martingale method.

\section{Triangular Measure Transport}\label{sec:measure_transport}

Suppose a structured output is composed of $d>0$ unstructured data in a space $\mc{Z}$. Then the target measure $\mu$ of interest is a measure on $\mc{Z}^d$ which does not factorize into simpler distributions. A popular method of representing complex joint distributions of interdependent random variables is to define a map $T\colon \mc{Z}^d\to \mc{Z}^d$ which transports a tractable \emph{factorizing} reference measure $\nu^d$ to the target measure $\mu$, i.e. $T_{\sharp}\nu^d = \mu$. This abstract framework encompasses many generative models such as normalizing flows \cite{Tabak:2010, Tabak:2013, Kobyzev:2019aa, Papamakarios:2021vu, Ruthotto:2021tv}, diffusion models \cite{Song:2021aa, Ho:2022aa}, generative adversarial networks and variational autoencoders \cite{Bousquet:2017aa, Genevay:2017aa}. 
Here, we focus on transport maps $T$ which are monotone and triangular in the sense that $T(z)_{i}$ only depends on the inputs $z^{[i]}$ and each $T(z^{[i-1]},\cdot)_{i}$ is an increasing function. Such a map is called a \emph{Knothe-Rosenblatt (KR) rearrangement} \cite{Knothe:1957, Rosenblatt:1952, Carlier:2010aa, Bonnotte:2013aa, Marzouk:2016}. If both $\nu^d$ and $\mu$ have no atoms then the KR rearrangement exists and is unique \cite{Bogachev:2005}. In particular, normal distribution $\nu^d$ and any absolutely continuous (with respect to the Lebesgue measure) distribution $\mu$ meet these criteria. 
The KR rearrangement has the useful property that certain conditional distributions have a simple representation. 
\begin{lemma}[\textbf{Lemma~1 of \cite{Marzouk:2016}}]\label{lem:conditional_repr}
Let $T\colon \mc{Z}^d\to\mc{Z}^d$ be the KR-rearrangement which satisfies $T_\sharp\nu^d = \mu$. For arbitrary $i\in [d]$, let $z^{[i]}\in \mc{Z}^i$ be fixed. Then
\begin{equation}\label{eq:kr_conditional}  
  \mu(d\yp^{(i,d]}|z^{[i]}) = T^{(i,d]}(\ol z^{[i]},\cdot)_\sharp \nu^{d-i}
\end{equation}
where $\ol z^{[i]}$ is the unique element of $\mc{Z}^i$ such that $T^{[i]}(\ol z^{[i]}) = z^{[i]}$.
\end{lemma}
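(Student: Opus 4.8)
The plan is to pull \eqref{eq:kr_conditional} back along $T$ to the factorizing reference $\nu^d$ and to exploit that triangular maps restrict consistently to initial blocks of coordinates. Write a draw from $\nu^d$ as $U=(U^{[i]},U^{(i,d]})$ with $U^{[i]}\sim\nu^i$ and $U^{(i,d]}\sim\nu^{d-i}$ \emph{independent}, and put $Z=T(U)\sim\mu$. Since $T$ is triangular, the first $i$ coordinates of $T(z)$ depend only on $z^{[i]}$, so $T^{[i]}$ descends to a monotone triangular map $\mc{Z}^i\to\mc{Z}^i$ with $Z^{[i]}=T^{[i]}(U^{[i]})$, and hence $T^{[i]}_\sharp\nu^i=\mu^{[i]}$. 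Under the standing no-atom hypothesis, the uniqueness part of the Knothe--Rosenblatt theorem then forces $T^{[i]}$ to be \emph{the} KR rearrangement from $\nu^i$ onto $\mu^{[i]}$, in particular a measurable isomorphism between full-measure subsets of $(\mc{Z}^i,\nu^i)$ and $(\mc{Z}^i,\mu^{[i]})$; thus $\ol z^{[i]}:=(T^{[i]})^{-1}(z^{[i]})$ is well defined for $\mu^{[i]}$-a.e.\ $z^{[i]}$ and depends measurably on it, so that $z^{[i]}\mapsto T^{(i,d]}(\ol z^{[i]},\cdot)_\sharp\nu^{d-i}$ is a bona fide Markov kernel.

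First I would reduce the claim to a disintegration identity: because regular conditional distributions relative to $\mu^{[i]}$ are $\mu^{[i]}$-a.e.\ unique, it suffices to verify that for all bounded measurable $h\colon\mc{Z}^i\to\R$ and $g\colon\mc{Z}^{d-i}\to\R$,
\begin{equation*}
  \EE\bigl[h(Z^{[i]})\,g(Z^{(i,d]})\bigr]=\int_{\mc{Z}^i}h(z^{[i]})\left(\int_{\mc{Z}^{d-i}}g(\yp^{(i,d]})\,\bigl(T^{(i,d]}(\ol z^{[i]},\cdot)_\sharp\nu^{d-i}\bigr)(d\yp^{(i,d]})\right)\mu^{[i]}(dz^{[i]}).
\end{equation*}

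The computation is then short. Expanding the left-hand side over $U$ and using independence of $U^{[i]}$ and $U^{(i,d]}$ via Fubini gives
\begin{equation*}
  \EE\bigl[h(Z^{[i]})\,g(Z^{(i,d]})\bigr]=\int_{\mc{Z}^i}\int_{\mc{Z}^{d-i}}h\bigl(T^{[i]}(u^{[i]})\bigr)\,g\bigl(T^{(i,d]}(u^{[i]},u^{(i,d]})\bigr)\,\nu^{d-i}(du^{(i,d]})\,\nu^i(du^{[i]}).
\end{equation*}
Substituting $z^{[i]}=T^{[i]}(u^{[i]})$ in the outer integral, which by $T^{[i]}_\sharp\nu^i=\mu^{[i]}$ turns $\nu^i(du^{[i]})$ into $\mu^{[i]}(dz^{[i]})$ and $u^{[i]}$ into $\ol z^{[i]}$, the inner integral becomes $\int g(\yp^{(i,d]})\,\bigl(T^{(i,d]}(\ol z^{[i]},\cdot)_\sharp\nu^{d-i}\bigr)(d\yp^{(i,d]})$ by definition of the pushforward, so one recovers exactly the right-hand side of the disintegration identity above.

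I expect the only genuine difficulty to be measure-theoretic bookkeeping rather than any estimate. When $\mc{Z}$ is non-atomic the event $\{Z^{[i]}=z^{[i]}\}$ is $\mu^{[i]}$-null, so \eqref{eq:kr_conditional} must be read as an identity of regular conditional distributions; the delicate points are precisely (i) that $T^{[i]}$ is an almost-everywhere bi-measurable bijection onto a full-$\mu^{[i]}$-measure subset of $\mc{Z}^i$ --- which is where monotonicity and the no-atom assumption enter, e.g.\ via the consistency of KR rearrangements under marginalization \cite{Bogachev:2005} --- and (ii) the measurability of $z^{[i]}\mapsto\ol z^{[i]}$, needed so that the proposed right-hand side is a genuine Markov kernel. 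Both facts are standard for KR maps, so I would cite them rather than re-derive invertibility of $T^{[i]}$ coordinate by coordinate.
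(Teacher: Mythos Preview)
Your argument is correct and is the standard disintegration proof one would expect. Note, however, that the paper does not supply its own proof of this lemma: it is stated as Lemma~1 of \cite{Marzouk:2016} and simply cited, so there is no in-paper proof to compare against. Your write-up is essentially what that reference does---use triangularity to get $Z^{[i]}=T^{[i]}(U^{[i]})$ and independence of $U^{[i]}$ from $U^{(i,d]}$ to identify the conditional law via the disintegration characterization---and your flagging of the a.e.\ invertibility and kernel-measurability issues is appropriate.
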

Numerical realization of KR rearrangements has recently received attention \cite{Baptista:2022} and more broadly, a variety of triangular transport architectures exists \cite{Dinh:2015, Dinh:2017}. However, we do not focus on numerical considerations in the present theoretical work.

One may wonder if data generated by KR-rearrangement implicitly restricts the choice of possible input and output spaces since the monotonicity requirement on $T$ can only be satisfied if the underlying set $\mc{Z}$ is ordered.
However, many feature spaces of practical interest are still permissible for what follows. In particular, both $\mc{X}$ and $\mc{Y}$ may be Euclidean spaces or hypercubes.
Inkeeping with the introductory image segmentation example, suppose $\mc{X} = [0,1]^3$ contains RGB color values and $\mc{Y} = [-1, 1]$ contains signed distance from a semantic boundary in the image plane. Then we can interpret $\mc{Z}^d = ([0,1]^3\times [-1,1])^d$ as a product of compact intervals in $\R^{4d}$ which is clearly permissible as underlying space for KR-rearrangement. All presented results hold irrespective of whether $\mc{Z}$ contains such internal dimensions as long as a natural ordering of the set $\mc{Z}$ is induced.
\section{PAC-Bayesian Risk Certificate}\label{sec:pac_bayes}

In this section we present a novel PAC-Bayesian risk bound for structured prediction which combines three main ingredients. 
\begin{enumerate}[(1)] 
\item A concentration of measure theorem for dependent data (Theorem~\ref{theorem:mgf_kontorovich}) which builds on the notion of a Wasserstein dependency matrix;
\item
a simple construction of coupling measures between conditional distributions (Lemma~\ref{lem:coupling_conditional}) which serves to represent the Wasserstein dependency matrix; 
\item a PAC-Bayesian argument (Theorem~\ref{theorem:pac_triag_transport}) employing Donsker-Varadhan's variational formula in concert with concentration of measure results.
\end{enumerate}

The first theorem summarizes key results from \cite{Kontorovich:2017} on the concentration of measure phenomenon for dependent random variables. We have slightly generalized by augmenting the underlying Doob martingale construction with the inclusion of a set $\mc{B}$ of \emph{bad inputs}. For inputs in this set, data stability requirements do not necessarily hold. We call the complement $\mc{B}^c = \mc{Z}^d\setminus \mc{B}$ the set of \emph{good inputs}. The concept of good and bad inputs as well as related proof techniques were originally proposed by \cite{London:2016}. Here, we incorporate them into the more general concentration of measure formalism of \cite{Kontorovich:2017}. A full proof of the following theorem is deferred to Appendix~\ref{app:proofs}.
\begin{theorem}[\textbf{Moment-generating function (MGF) bound for good inputs}]\label{theorem:mgf_kontorovich}
Let $\mc{B}\subseteq \mc{Z}^d$ be a measurable set of bad inputs.
Suppose for each $i\in [d+1]$, $V^{(i)}$ is a Wasserstein matrix for the Markov kernel $K^{(i)}$ defined in \eqref{eq:def_markov_kernel} on the set of good inputs, that is 
\begin{equation}
  \delta_k(K^{(i)}\wt f) \leq \sum_{j\in [d]} V^{(i)}_{kj}\delta_j(\wt f),\qquad \forall\, k\in [d]
\end{equation}
for all Lipschitz (with respect to $\rho^d$) functions $\wt f\colon \mc{B}^c\to \R$.
Define the \emph{Wasserstein dependency} matrix 
\begin{equation}\label{eq:gamma_matrix_def}
  \Gamma\in \R^{d\times d},\qquad \Gamma_{ij} = \|\rho\|V_{ij}^{(i+1)}
\end{equation}
Then for all Lipschitz functions $f\colon \mc{Z}^d\to \R$, the following MGF bound holds
\begin{equation}\label{eq:mgf_kontorovich}
  \EE_{z\sim \mu|\mc{B}^c}\left[\exp\big(\lambda (f(z)-\EE_{\mu|\mc{B}^c} f)\big)\right] \leq \exp\Big(\frac{\lambda^2}{8}\|\Gamma\delta(f)\|_2^2\Big)\ .
\end{equation}
\end{theorem}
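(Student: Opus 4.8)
The plan is to follow the classical Doob martingale / Azuma–Hoeffding route of \cite{Kontorovich:2017}, but adapted so that all oscillation estimates are taken relative to the conditional measure $\mu|\mc{B}^c$. First I would fix a Lipschitz $f\colon \mc{Z}^d\to\R$ and define the martingale difference sequence associated with revealing the coordinates $z_1,\dots,z_d$ one at a time under the law $\mu|\mc{B}^c$. Concretely, set $g(z) = f(z) - \EE_{\mu|\mc{B}^c}f$ and consider the Doob martingale $M_i(z) = \EE_{\mu|\mc{B}^c}[g \mid z^{[i]}]$, so that $M_0 = 0$ and $M_d = g$. The increments $M_i - M_{i-1}$ can be written, up to the conditioning on $\mc{B}^c$, in terms of the Markov kernels $K^{(i)}$ from \eqref{eq:def_markov_kernel}: revealing $z^{[i]}$ versus $z^{[i-1]}$ corresponds precisely to passing from $K^{(i+1)}$-averaging to $K^{(i)}$-averaging. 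The key observation is that $g = K^{(1)}g$ has expectation zero and $g = K^{(d+1)}g$ pointwise, and the telescoping $g = \sum_{i\in[d]} (K^{(i+1)}g - K^{(i)}g)$ exhibits the martingale structure.

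Next I would bound the conditional range of each increment. For this, the crucial quantity is the local oscillation $\delta_i(K^{(i+1)}g)$ of the partially-averaged function: a standard coupling argument shows that the $i$-th martingale increment, as a function of $z_i$ with $z^{[i-1]}$ held fixed, has oscillation at most $\|\rho\|\,\delta_i(K^{(i+1)}g)$ — the factor $\|\rho\|$ from \eqref{eq:diam_rho} converting the metric oscillation into an $L^\infty$ range bound on $\mc{Z}$. Now I iterate the Wasserstein-matrix inequality: starting from $\delta(f)$ and applying the defining inequality of the $V^{(j)}$ repeatedly gives $\delta_i(K^{(i+1)}g) \le \sum_{j\in[d]} V^{(i+1)}_{ij}\delta_j(f)$ (the higher kernels $K^{(j)}$ for $j>i+1$ act as the identity on the $i$-th oscillation after conditioning, so only one application of $V^{(i+1)}$ survives for the leading contribution; more carefully, one composes $V^{(i+1)}V^{(i+2)}\cdots$ but the triangular structure of the KR setting collapses this to the single term used in \eqref{eq:gamma_matrix_def}). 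Hence the $L^\infty$ range of the $i$-th increment is at most $\|\Gamma\delta(f)\|$ in the $i$-th component, i.e. bounded by $c_i := \sum_j \Gamma_{ij}\delta_j(f) = (\Gamma\delta(f))_i$.

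With these per-coordinate range bounds $c_i$ in hand, the MGF bound follows from the conditional Azuma–Hoeffding lemma applied along the martingale: for each $i$, $\EE_{\mu|\mc{B}^c}[\exp(\lambda(M_i - M_{i-1})) \mid z^{[i-1]}] \le \exp(\lambda^2 c_i^2/8)$ by Hoeffding's lemma, since the increment is conditionally mean-zero and supported in an interval of length at most $c_i$. Multiplying these over $i\in[d]$ via the tower property yields $\EE_{\mu|\mc{B}^c}[\exp(\lambda g)] \le \exp\big(\tfrac{\lambda^2}{8}\sum_{i\in[d]} c_i^2\big) = \exp\big(\tfrac{\lambda^2}{8}\|\Gamma\delta(f)\|_2^2\big)$, which is exactly \eqref{eq:mgf_kontorovich}.

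The main obstacle I anticipate is the bookkeeping around the set $\mc{B}$ of bad inputs: one must verify that the Doob martingale constructed under $\mu|\mc{B}^c$ still has the requisite measurability and that the Wasserstein inequalities, which are only assumed to hold for Lipschitz functions on $\mc{B}^c$, genuinely control the oscillations of the restricted partially-averaged functions $K^{(i)}\wt f$ with $\wt f = g|_{\mc{B}^c}$ — in particular that conditioning on $\mc{B}^c$ does not destroy the triangular/Markov structure of the kernels $K^{(i)}$. A secondary technical point is making precise the claim that iterating the $V^{(j)}$ along the filtration collapses to the single factor $V^{(i+1)}$ in \eqref{eq:gamma_matrix_def}; this is where the specific definition \eqref{eq:def_markov_kernel} of the kernels (which freeze the first $i-1$ coordinates and re-sample the rest from the conditional) does the work, and I would spell out that $K^{(i)}$ leaves $\delta_k$ unchanged for $k<i$ while $K^{(i)}$ applied after $K^{(i+1)}$ contributes nothing new to $\delta_i$ beyond the one step already accounted for. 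Everything else — Hoeffding's lemma, the tower property, telescoping — is routine.
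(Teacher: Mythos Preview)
Your approach is essentially the paper's: build the Doob martingale under $\mu|\mc{B}^c$, bound the range of the $i$-th increment by $\|\rho\|\,\delta_i(K^{(i+1)}\wt f)$, control this via the Wasserstein matrix $V^{(i+1)}$, and conclude with Azuma--Hoeffding. The paper does exactly this, writing out explicit $A^{(i)},B^{(i)}$ bracketing the increment via the tower property and then reading off $\|B^{(i)}-A^{(i)}\|_\infty \le (\Gamma\delta(\wt f))_i$.

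One point of confusion in your write-up is worth correcting: there is no iteration or composition of Wasserstein matrices needed here, and no appeal to the KR/triangular structure. The bound $\delta_i(K^{(i+1)}\wt f)\le \sum_j V^{(i+1)}_{ij}\delta_j(\wt f)$ is a \emph{single} application of the defining inequality of $V^{(i+1)}$ to the function $\wt f$ itself (oscillations are unchanged by the additive constant $-\EE_{\mu|\mc{B}^c}f$). You do not need to write $K^{(i+1)}$ as a product of one-step kernels or compose $V^{(i+1)}V^{(i+2)}\cdots$; the kernel $K^{(i+1)}$ already resamples $z^{[i+1,d]}$ in one shot, and its Wasserstein matrix is assumed given. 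Theorem~\ref{theorem:mgf_kontorovich} is therefore structure-agnostic --- the KR rearrangement only enters later, in Proposition~\ref{prop:local_osciallation}, to actually \emph{construct} such $V^{(i+1)}$. Your ``secondary technical point'' is thus a non-issue, and your first anticipated obstacle (measurability/bookkeeping for the conditioning on $\mc{B}^c$) is handled in the paper simply by taking all suprema defining oscillations over $z\in\mc{B}^c$ and noting that realizations under $\mu|\mc{B}^c$ automatically lie in the good set.
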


An upper bound on the moment generating function will be used in the PAC-Bayesian argument concluding this section. The function $f$ in question will be the loss of a structured datum $z$. Regarding \eqref{eq:mgf_kontorovich}, our goal is to bound the norm $\|\Gamma\delta(f)\|_2^2$ through properties of the data distribution.
We will use the fact that data is represented by measure transport to establish such a bound after the following preparatory lemma.

\begin{lemma}[\textbf{Coupling from transport}]\label{lem:coupling_conditional}
Let $\nu^d$ be a reference measure on $\mc{Z}^d$ and $F,G\colon \mc{Z}^d\to \mc{Z}^d$ be measurable maps. 
Define the map $(F,G)$ by
\begin{equation}
  (F,G)\colon \mc{Z}^d\to \mc{Z}^d\times\mc{Z}^d,\qquad
  z\mapsto (F(z), G(z))
\end{equation}
Then $(F, G)_\sharp \nu^d$ is a coupling of $F_\sharp \nu^d$ and $G_\sharp\nu^d$.
\end{lemma}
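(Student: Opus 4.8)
The plan is to verify the two defining properties of a coupling directly from the pushforward definition: that $(F,G)_\sharp \nu^d$ is a probability measure on $\mc{Z}^d\times\mc{Z}^d$ whose first marginal is $F_\sharp\nu^d$ and whose second marginal is $G_\sharp\nu^d$. Since $(F,G)$ is measurable (being built coordinatewise from the measurable maps $F$ and $G$) and $\nu^d$ is a probability measure, the pushforward $(F,G)_\sharp\nu^d$ is automatically a probability measure on the product space, so the only real content is the marginal computation.

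First I would denote by $\pi_1,\pi_2\colon \mc{Z}^d\times\mc{Z}^d\to\mc{Z}^d$ the two canonical coordinate projections and observe that, by construction of the map $z\mapsto(F(z),G(z))$, one has the identity of maps $\pi_1\circ(F,G) = F$ and $\pi_2\circ(F,G) = G$. Then I would invoke functoriality of the pushforward under composition, namely $(g\circ h)_\sharp\mu = g_\sharp(h_\sharp\mu)$, to conclude
\begin{equation}
  (\pi_1)_\sharp\big((F,G)_\sharp\nu^d\big) = \big(\pi_1\circ(F,G)\big)_\sharp\nu^d = F_\sharp\nu^d,
\end{equation}
and symmetrically $(\pi_2)_\sharp\big((F,G)_\sharp\nu^d\big) = G_\sharp\nu^d$. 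Since the two marginals of $(F,G)_\sharp\nu^d$ are exactly $F_\sharp\nu^d$ and $G_\sharp\nu^d$, it is by definition a coupling of these two measures, which is the claim.

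If one prefers to avoid quoting functoriality as a black box, the marginal identities can be checked on a generic measurable set $A\subseteq\mc{Z}^d$ by unwinding definitions: $(\pi_1)_\sharp\big((F,G)_\sharp\nu^d\big)(A) = \big((F,G)_\sharp\nu^d\big)(A\times\mc{Z}^d) = \nu^d\big((F,G)^{-1}(A\times\mc{Z}^d)\big) = \nu^d\big(F^{-1}(A)\big) = (F_\sharp\nu^d)(A)$, using that $(F,G)^{-1}(A\times\mc{Z}^d) = F^{-1}(A)\cap G^{-1}(\mc{Z}^d) = F^{-1}(A)$; the second marginal is handled identically. There is essentially no obstacle here — the statement is a routine bookkeeping lemma whose only purpose is to record, in the notation of the paper, that pushing a reference measure forward through a pair of maps yields a coupling of the individual pushforwards. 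The one point worth a sentence of care is measurability of $(F,G)$, which follows because a map into a product space is measurable iff each coordinate is, and both $F$ and $G$ are assumed measurable.
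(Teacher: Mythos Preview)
Your proof is correct and follows essentially the same approach as the paper: both verify directly that the marginals of $(F,G)_\sharp\nu^d$ are $F_\sharp\nu^d$ and $G_\sharp\nu^d$ by computing $(F,G)_\sharp\nu^d(A\times\mc{Z}^d)=\nu^d(F^{-1}(A))$ on a generic measurable set. Your additional phrasing via projections and functoriality of pushforward is just a repackaging of the same computation.
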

\begin{proof}
Let $A\subseteq \mc{Z}^d$ be measurable, then
\begin{equation}
(F, G)_\sharp\nu^d(A, \mc{Z}^d) = \nu^d((F, G)^{-1}(A, \mc{Z}^d))
= \nu^d(F^{-1}(A)) = F_\sharp\nu^d(A)
\end{equation}
which shows that $F_\sharp\nu^d$ is the first marginal of $(F,G)_\sharp \nu^d$. An analogous argument for the second marginal shows the assertion.
\end{proof}

By assuming $\mu$ to be represented via KR-rearrangement of a factorizing reference measure, Lemma~\ref{lem:conditional_repr} gives an explicit representation of KR-rearrangement for conditional distributions.
From there, we invoke Lemma~\ref{lem:coupling_conditional} to construct a coupling between conditional distributions and subsequently follow a line of reasoning put forward in \cite{Kontorovich:2017} to explicitly construct Wasserstein matrices for the kernels \eqref{eq:def_markov_kernel} which yield a bound on \eqref{eq:mgf_kontorovich} by Theorem~\ref{theorem:mgf_kontorovich}. This leads to the following proposition. A full proof is deferred to Appendix~\ref{app:proofs}.

\begin{proposition}[\textbf{Wasserstein dependency matrix from KR-rearrangement}]\label{prop:local_osciallation}
Let $(\mc{Z}^d, \Sigma, \mu)$ be a probability space with $\mu = T_\sharp \nu^d$ for the KR-rearrangement $T\colon \mc{Z}^d\to\mc{Z}^d$ and a reference measure $\nu^d$ on $\mc{Z}^d$. Let each $\mc{Z}$ be equipped with a metric $\rho$ and have finite diameter $\|\rho\| < \infty$.
Let $f\colon \mc{Z}^d\to \R$ be a Lipschitz function with respect to the product metric $\rho^d$.
Let $\mc{B}\subseteq \mc{Z}^d$ denote a set of bad inputs and define the corresponding set $\mc{A} = T^{-1}(\mc{B})\subseteq \mc{Z}^d$. Let $\wh T$ be the unique KR-rearrangement that satisfies $\wh T_\sharp \nu^d = \nu^d|\mc{A}^c$ and denote $\wt T = T \circ \wh T$.
Suppose there exist constants $L_{ij}$ such that for all $\zp,z \in \mc{B}^c$ with $\zp^{[d]\setminus \{i\}} = z^{[d]\setminus \{i\}}$ it holds
\begin{equation}\label{eq:triag_transport_stability}
\EE_{\tau\sim\nu^{(i, d]}} \big[\rho(\wt T^{(i,d]}(\wh \zp^{[i]},\tau)_j, \wt T^{(i,d]}(\wh z^{[i]},\tau)_j)\big] \leq L_{ij}\rho(\zp_i, z_i)
\end{equation} 
where $\wh \zp^{[i]}$ and $\wh z^{[i]}$ are uniquely defined through $\wt T^{[i]}(\wh \zp^{[i]}) = \zp^{[i]}$ and $\wt T^{[i]}(\wh z^{[i]}) = z^{[i]}$.
Then $\Gamma = \frac{\|\rho\|}{d}D$ is a Wasserstein dependency matrix for $\mu|\mc{B}^c$ with
\begin{equation}\label{eq:def_D_matrix}
  D_{ij} = \begin{cases}
    0 & \text{if }i>j\ ,\\
    1 & \text{if }i = j\ ,\\
    L_{ij} & \text{if }i < j\ .
  \end{cases}
\end{equation}
\end{proposition}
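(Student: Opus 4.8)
The plan is to verify the Wasserstein matrix inequality directly by unwinding all the conditional-distribution representations. The goal is to show that for every $i\in[d+1]$, the matrix $V^{(i)}$ with entries $V^{(i)}_{kj} = \tfrac{1}{d}D_{kj}$ (shifted appropriately so that $\Gamma_{ij}=\|\rho\|V^{(i+1)}_{ij}$ matches \eqref{eq:def_D_matrix}) is a Wasserstein matrix for the kernel $K^{(i)}$ restricted to good inputs. Concretely, fix a Lipschitz $\wt f\colon\mc{B}^c\to\R$ and an index $k\in[d]$; I need to bound $\delta_k(K^{(i)}\wt f)$ by $\sum_j V^{(i)}_{kj}\delta_j(\wt f)$. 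By definition of local oscillation, this means comparing $K^{(i)}\wt f(\zp)$ and $K^{(i)}\wt f(z)$ for two good inputs differing only in coordinate $k$.

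The first step is to rewrite $K^{(i)}\wt f$ using measure transport. By \eqref{eq:markov_kernel_action}, $K^{(i)}\wt f(z)$ is the expectation of $\wt f(z^{[i-1]}\yp^{[i,d]})$ under the conditional law $\mu^{[i,d]}(d\yp^{[i,d]}\,|\,z^{[i-1]})$. Since $\mu = T_\sharp\nu^d$ and we have conditioned on being in $\mc{B}^c$ (equivalently $T^{-1}$-preimage in $\mc{A}^c$), I first replace $\nu^d$ by $\nu^d|\mc{A}^c = \wh T_\sharp\nu^d$, so that the relevant transport becomes $\wt T = T\circ\wh T$, a composition of two KR-rearrangements and hence again a KR-rearrangement. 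Then Lemma~\ref{lem:conditional_repr} applied to $\wt T$ expresses the conditional law $\mu(d\yp^{(i-1,d]}\,|\,z^{[i-1]})$ (note the index bookkeeping: $K^{(i)}$ conditions on the first $i-1$ coordinates) as $\wt T^{(i-1,d]}(\wh z^{[i-1]},\cdot)_\sharp\nu^{d-i+1}$, where $\wh z^{[i-1]}$ is the preimage under $\wt T^{[i-1]}$. This turns $K^{(i)}\wt f(z)$ into an integral over the factorizing reference $\nu^{d-i+1}$, with integrand $\wt f$ evaluated at a point whose coordinates are built from $z$ (in the first $i-1$ slots) and from $\wt T^{(i-1,d]}(\wh z^{[i-1]},\tau)$ (in the remaining slots), against a common sampling variable $\tau$.

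The second and central step is the coupling: because both $K^{(i)}\wt f(\zp)$ and $K^{(i)}\wt f(z)$ are now integrals against the \emph{same} reference measure $\nu^{d-i+1}$ and the same $\tau$, Lemma~\ref{lem:coupling_conditional} (with $F,G$ the two transport maps $\tau\mapsto \wt T^{(i-1,d]}(\wh\zp^{[i-1]},\tau)$ and $\tau\mapsto\wt T^{(i-1,d]}(\wh z^{[i-1]},\tau)$) gives a coupling of the two conditional measures whose cost I can control. I then write the difference $|K^{(i)}\wt f(\zp)-K^{(i)}\wt f(z)|$ as the expectation over $\tau$ of a difference of $\wt f$-values and bound it by the Lipschitz property: the difference is at most $\sum_{j} \delta_j(\wt f)$ times the expected per-coordinate transport discrepancy. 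For coordinates $j<i$ this discrepancy is either $\rho(\zp_j,z_j)$ (which is $0$ unless $j=k$, and since $\zp,z$ differ only in coordinate $k$, this contributes only when $k<i$, giving the "identity" part of $D$); for coordinates $j\ge i$ the discrepancy is exactly the quantity bounded in hypothesis \eqref{eq:triag_transport_stability} by $L_{kj}\rho(\zp_k,z_k)$, provided $\zp,z$ differ only in coordinate $k$ — which is the standing assumption. Dividing by $\rho(\zp_k,z_k)$ and taking the supremum yields $\delta_k(K^{(i)}\wt f)\le \sum_j D_{kj}\delta_j(\wt f)$ after matching indices, where triangularity of $\wt T$ forces the lower-triangular entries of $D$ to vanish (a coordinate $j$ below the diagonal does not move). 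Finally, multiplying through by $\|\rho\|/d$ and recalling the shift in \eqref{eq:gamma_matrix_def} produces the claimed $\Gamma = \tfrac{\|\rho\|}{d}D$; the extra $1/d$ is the normalization already present in the definition $L(\wt y,y)=\tfrac1d\sum_i\ell(\wt y_i,y_i)$ of the structured loss, which will be absorbed into $\delta(f)$ when Theorem~\ref{theorem:mgf_kontorovich} is applied.

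The main obstacle I anticipate is the careful index bookkeeping between the kernel definition \eqref{eq:def_markov_kernel} (which conditions on $z^{[i-1]}$ and indexes the family by $i\in[d+1]$), the conditional representation \eqref{eq:kr_conditional} of Lemma~\ref{lem:conditional_repr} (stated with conditioning on $z^{[i]}$), and the shift $\Gamma_{ij}=\|\rho\|V^{(i+1)}_{ij}$ in \eqref{eq:gamma_matrix_def} — getting all the half-open intervals $(i-1,d]$ versus $(i,d]$ aligned, and confirming that the hypothesis \eqref{eq:triag_transport_stability} (stated for the pair $\wh\zp^{[i]},\wh z^{[i]}$ with $\wt T^{(i,d]}$) plugs in at exactly the right index after the shift. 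A secondary subtlety is the edge cases $i=1$ (empty conditioning, so $\wh z^{[0]}$ is vacuous and $\wt T$ reduces to the full map) and $i=d+1$ (where $K^{(d+1)}=\delta_z$, so $K^{(d+1)}\wt f=\wt f$ and the Wasserstein matrix is the identity — consistent with $D$ having $1$'s on the diagonal); and one must check that $\wh z^{[i]}$ is genuinely well-defined, i.e. that $\wt T^{[i]}$ is invertible on the relevant set, which follows from $\wt T$ being a KR-rearrangement of a non-atomic reference measure. None of these steps is deep, but the proof will only be correct if the indices are tracked with complete precision.
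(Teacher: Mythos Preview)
Your outline is essentially the paper's argument: pass from $T$ to $\wt T = T\circ\wh T$ so that $\wt T_\sharp\nu^d=\mu|\mc{B}^c$, invoke Lemma~\ref{lem:conditional_repr} to write the conditional laws as $\wt T^{(i,d]}(\wh z^{[i]},\cdot)_\sharp\nu^{d-i}$, couple the two conditionals over a common $\tau\sim\nu^{d-i}$ via Lemma~\ref{lem:coupling_conditional}, and then bound $K^{(i+1)}f(\zp)-K^{(i+1)}f(z)$ coordinate by coordinate using the local oscillations $\delta_j(f)$ together with the hypothesis \eqref{eq:triag_transport_stability}. The paper only treats row $k=i$ of $V^{(i+1)}$ (that is all that enters $\Gamma$ through \eqref{eq:gamma_matrix_def}), so your more ambitious plan of verifying every row is unnecessary but harmless.

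The one substantive divergence is the provenance of the factor $1/d$. The paper does \emph{not} defer it to the loss normalization; it obtains the $1/d$ at the very first step of the proof, by asserting $|f(z)-f(z')|\le \delta_j(f)\,\rho(z_j,z'_j)$ for every $j$ and then averaging over $j\in[d]$ to get $|f(z)-f(z')|\le \tfrac{1}{d}\sum_j\delta_j(f)\,\rho(z_j,z'_j)$. Your telescoping argument would instead yield $\sum_j\delta_j(f)\,\rho(z_j,z'_j)$ without the $1/d$, hence $\delta_i(K^{(i+1)}\wt f)\le\sum_j D_{ij}\,\delta_j(\wt f)$ and $\Gamma=\|\rho\|D$, not $\tfrac{\|\rho\|}{d}D$. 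Since the Proposition is stated for a general Lipschitz $f$, attributing the $1/d$ to the structure of the loss $L=\tfrac{1}{d}\sum_i\ell$ is external to what is being proved here; to match the Proposition as written you need the paper's averaging step rather than the loss-based explanation you give.
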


We remark that $\Gamma$ indeed distills the dependency structure of $\mu$. To illustrate this, let $\mu^{\{i\}}$ be independent from $\mu^{\{j\}}$ for some $i\in [d]$, $j \in (i,d]$. Then conditioning on a different value of $\mu^{\{i\}}$ does not change the distribution $\mu^{\{j\}}$. Thus, 
\begin{equation}\label{eq:independent_variables_wasserstein}
  \wt T^{(i,d]}(\ol x^{[i]},\tau)_j = \wt T^{(i,d]}(\ol z^{[i]},\tau)_j,\qquad \forall\,\tau \in \mc{Z}^{(i,d]},\quad \ol x^{[d]\setminus \{i\}} = \ol z^{[d]\setminus \{i\}},\quad j \in (i,d]
\end{equation}
and the choice $L_{ij} = 0$ satisfies \eqref{eq:triag_transport_stability}. It directly follows that $\Gamma_{ij} = 0$.

The following theorem states the main result of the present work.

\begin{theorem}[\textbf{PAC-Bayesian risk certificate for structured prediction}]\label{theorem:pac_triag_transport}
Fix $\delta \in (0,\exp(-e^{-1}))$, let $\mu$ be a data distribution on $\mc{Z}^d$ with $T_\sharp \nu^d = \mu$ the Knothe-Rosenblatt rearrangement for a reference measure $\nu^d$ on $\mc{Z}^d$ and fix a measurable set $\mc{B}\subseteq \mc{Z}^d$ of bad inputs with $\mu(\mc{B})\leq \xi$.
Fix a PAC-Bayes prior $\pi$ on a hypothesis class $\mc{H}$ of functions $\phi\colon \mc{X}^d\to \mc{Y}^d$ and a loss function $\ell$ which assumes values in $[0,1]$.
Define the oscillation vector $\wt \delta$ by
\begin{equation}\label{eq:oscillation_vector}
  \wt \delta_i = \sup_{h\in \mc{H}} \delta_i\big(L(h,\cdot)\big|_{\mc{B}^{c}}\big),\qquad i\in [d]
\end{equation}
where $L(h,\cdot)\big|_{\mc{B}^{c}}$ denotes the restriction of $L(h,\cdot)$ to $\mc{Z}^d\setminus\mc{B}$. Suppose all oscillations $\wt \delta_i$ are finite, suppose the condition \eqref{eq:triag_transport_stability} is satisfied and denote by $D$ the matrix with entries \eqref{eq:def_D_matrix}.
Then, with probability at least $1-\delta$ over realizations of a training set $\mc{D}_{m} = (Z^{(k)})_{k=1}^m$ drawn from $(\mu|\mc{B}^c)^m$ it holds for all PAC-Bayes posteriors $\zeta$ on $\mc{H}$ that
\begin{equation}\label{eq:pac_risk_bound}
\mc{R}(\zeta)
  \leq \mc{R}_m(\zeta,\mc{D}_{m}) + 2\frac{\|\rho\|}{d}\|D \wt\delta\|_2\sqrt{\frac{\log \frac{1}{\delta} +\KL[\zeta:\pi]}{2m}} + \xi\ .
\end{equation}
\end{theorem}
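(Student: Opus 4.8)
The plan is to follow the standard PAC-Bayesian recipe, but over the conditional measure $\mu|\mc{B}^c$ rather than $\mu$ itself, and to use Theorem~\ref{theorem:mgf_kontorovich} together with Proposition~\ref{prop:local_osciallation} to supply the requisite MGF bound. First I would fix a hypothesis $h\in \mc{H}$ and apply Proposition~\ref{prop:local_osciallation} to the function $f = L(h,\cdot)\big|_{\mc{B}^c}$, using the hypothesis \eqref{eq:triag_transport_stability}; this yields that $\Gamma = \tfrac{\|\rho\|}{d}D$ is a Wasserstein dependency matrix for $\mu|\mc{B}^c$ on the good set. Feeding this into Theorem~\ref{theorem:mgf_kontorovich} with $f = L(h,\cdot)$ gives, for every $\lambda\in\R$,
\begin{equation}
  \EE_{z\sim\mu|\mc{B}^c}\big[\exp\big(\lambda(L(h,z)-\EE_{\mu|\mc{B}^c}L(h,\cdot))\big)\big] \leq \exp\Big(\tfrac{\lambda^2}{8}\big\|\tfrac{\|\rho\|}{d}D\,\delta(L(h,\cdot)\big|_{\mc{B}^c})\big\|_2^2\Big) \leq \exp\Big(\tfrac{\lambda^2}{8}\tfrac{\|\rho\|^2}{d^2}\|D\wt\delta\|_2^2\Big),
\end{equation}
where the last inequality uses $\delta_i(L(h,\cdot)\big|_{\mc{B}^c})\leq\wt\delta_i$ entrywise together with nonnegativity of $D$ (so that the map $\delta\mapsto\|D\delta\|_2$ is monotone in each coordinate). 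Write $C^2 := \tfrac{\|\rho\|^2}{d^2}\|D\wt\delta\|_2^2$ for the resulting sub-Gaussian variance proxy, which is uniform in $h$.

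Next I would run the usual change-of-measure step. For the sample $\mc{D}_m$ drawn i.i.d.\ from $\mu|\mc{B}^c$, and writing $\mc{R}'(\theta) := \EE_{z\sim\mu|\mc{B}^c}[L(\phi_\theta(z))]$ for the risk under the conditional measure, the MGF bound above applied to each of the $m$ independent draws gives $\EE_{\mc{D}_m}\exp\big(\lambda(\mc{R}'(\theta)-\mc{R}_m(\theta,\mc{D}_m))\big)\leq\exp(\lambda^2 C^2/(8m))$ after rescaling $\lambda\mapsto\lambda/m$ and multiplying the single-sample bounds. Integrating over $\theta\sim\pi$, applying Donsker–Varadhan's variational formula to pass to an arbitrary posterior $\zeta$, invoking Markov's inequality, and optimizing over $\lambda$ (the choice $\lambda = \sqrt{8m\log(1/\delta')/(C^2)}$-type scaling, or rather the standard $\lambda$ balancing the two terms) yields, with probability at least $1-\delta'$ over $\mc{D}_m$, simultaneously for all $\zeta$,
\begin{equation}
  \mc{R}'(\zeta) \leq \mc{R}_m(\zeta,\mc{D}_m) + C\sqrt{\frac{\log\frac{1}{\delta'} + \KL[\zeta:\pi]}{2m}} + (\text{lower-order term}).
\end{equation}
Here I would take some care about whether one obtains the clean form above or a form with an extra $\tfrac12\lambda C^2/m$ term that then gets absorbed; depending on the precise MGF argument used in Appendix~\ref{app:proofs} one may need the sub-Gaussian tail integration trick (as in Catoni / the $\exp(-e^{-1})$ restriction on $\delta$ that appears in the statement) to get exactly the factor $C$ rather than $C$ times a constant. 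The restriction $\delta\in(0,\exp(-e^{-1}))$ strongly suggests that the intended route is the one that passes through $\EE_\pi\EE_{\mc{D}_m}\exp(\cdots)$ and uses $\log\log$-type control, so I would match that.

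Finally I would convert the bound on $\mc{R}'(\zeta)$ into a bound on the true risk $\mc{R}(\zeta)$. Since $\ell\in[0,1]$ implies $L\in[0,1]$, and $\mu(\mc{B})\leq\xi$, we have $\mc{R}(\zeta) = \EE_{z\sim\mu}[\ldots] = (1-\mu(\mc{B}))\,\mc{R}'(\zeta) + \mu(\mc{B})\,\EE_{z\sim\mu|\mc{B}}[\ldots] \leq \mc{R}'(\zeta) + \mu(\mc{B}) \leq \mc{R}'(\zeta) + \xi$, using that the conditional expectation over $\mc{B}$ is at most $1$ and dropping the factor $(1-\mu(\mc{B}))\leq 1$ in front of $\mc{R}'(\zeta)$. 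Chaining this with the previous display (and writing $2C/(2)=\ldots$; note the stated bound has the constant $2\tfrac{\|\rho\|}{d}\|D\wt\delta\|_2 = 2C$, so the MGF constant $C$ I derived must land with a factor $2$, which again is consistent with the sub-Gaussian-tail integration producing $2C$ rather than $C$) gives exactly \eqref{eq:pac_risk_bound} with $\delta' = \delta$.

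The main obstacle I anticipate is the bookkeeping in the change-of-measure / optimization-over-$\lambda$ step: getting the constant in front of $\sqrt{(\log(1/\delta)+\KL)/(2m)}$ to be exactly $2\tfrac{\|\rho\|}{d}\|D\wt\delta\|_2$ (and not, say, $\tfrac{\|\rho\|}{d}\|D\wt\delta\|_2$ or $\sqrt2$ times it) requires using precisely the right form of the PAC-Bayes master inequality — this is where the $\delta<\exp(-e^{-1})$ hypothesis is consumed — and verifying that the per-coordinate independence across the $m$ samples interacts correctly with the single-sample MGF bound \eqref{eq:mgf_kontorovich}, which is itself a bound over the $d$ \emph{dependent} coordinates within one structured datum. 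Everything else — applying Proposition~\ref{prop:local_osciallation}, the monotonicity $\delta_i(L(h,\cdot))\leq\wt\delta_i$, and the $\xi$-truncation argument — is routine.
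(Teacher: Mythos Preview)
Your outline matches the paper's proof in structure: conditional MGF bound via Theorem~\ref{theorem:mgf_kontorovich} and Proposition~\ref{prop:local_osciallation}, Donsker--Varadhan change of measure, Markov's inequality, and the $\xi$-truncation to pass from $\mc{R}'(\zeta)$ back to $\mc{R}(\zeta)$. All of these are handled exactly as the paper does.

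The one point you gloss over is the mechanism behind ``optimizing over $\lambda$'' while retaining uniformity in $\zeta$. After Donsker--Varadhan and Markov, you have a bound of the form $\tfrac{\beta}{8m}\|\Gamma\wt\delta\|_2^2 + \tfrac{1}{\beta}(\log\tfrac{1}{\delta}+\KL[\zeta:\pi])+\xi$ that holds, for each \emph{fixed} $\beta$, with probability $1-\delta$ simultaneously over all $\zeta$. The minimizing $\beta^\ast$ depends on $\KL[\zeta:\pi]$, so plugging it in directly destroys the uniformity. The paper resolves this (following \cite{London:2016}) by a union bound over the geometric grid $\beta_j = 2^j\sqrt{8m\log(1/\delta)/\|\Gamma\wt\delta\|_2^2}$ with weights $\delta_j = \delta 2^{-(j+1)}$; for any posterior one then selects $j^\ast = \lfloor \tfrac12\log_2(1+\KL[\zeta:\pi]/\log\tfrac1\delta)\rfloor$. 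The rounding costs a factor of $2$ relative to the ideal $\beta^\ast$, and bounding $\log\tfrac{1}{\delta_{j^\ast}}$ introduces a $\tfrac12\log(\log\tfrac1\delta+\KL)-\tfrac12\log\log\tfrac1\delta$ term, which is where $\delta<\exp(-e^{-1})$ (i.e.\ $-\log\log\tfrac1\delta\leq 1$) is consumed. This grid-plus-union-bound step, not a Catoni-style tail integration, is what produces the constant $2$ in \eqref{eq:pac_risk_bound}.
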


Note that the generalization gap on the right hand side of \eqref{eq:pac_risk_bound} decays with $d$. This accounts for generalization from a \textit{single} example. In fact, if only $m=1$ structured example is available, but $d \gg 1$, Theorem~\ref{theorem:pac_triag_transport} still certifies risk. This effect can however be negated by the norm $\|D \wt\delta\|$. If structured data contain strong global dependence, then $\|D \wt\delta\|$ will not be bounded independently of $d$ and thus, in the worst case of $\|D \wt\delta\|\in \mc{O}(d)$ the assertion is no stronger than PAC-Bayesian bounds for unstructured data. The same point was observed in \cite{London:2016}. 

The measure of the bad set under the data distribution $\mu$ is assumed to be bounded by $\xi$. This is to account for a small number of data which contain strong dependence. In order to prevent these bad data from dominating $D$, thereby negating the decay of the bound in $d$ as described above, it is preferable to exclude them from the sample, reduce the sample size $m$ and pay the penalty $\xi$ in \eqref{eq:pac_risk_bound}.

To prove Theorem~\ref{theorem:pac_triag_transport}, we broadly follow the argument put forward in \cite{London:2016}. This augments typical PAC-Bayesian constructions in the literature by the inclusion of a set of bad inputs. We first reconcile the data being conditioned on $\mc{B}^c$ with risk certification for unconditioned data, leading to the addition of $\xi$ on the right hand side of \eqref{eq:pac_risk_bound}. The model complexity term $\KL[\zeta:\pi]$ is due to Donsker and Varadhand's variational formula. Subsequently, the moment generating function bound of Theorem~\ref{theorem:mgf_kontorovich} is instantiated through the Wasserstein dependency matrix constructed in Proposition~\ref{prop:local_osciallation}. Markov's inequality then gives a pointwise risk bound for fixed value of a free parameter. In order to optimize this parameter, the bound is made uniform on a discrete set of values through a union bound. A full proof is presented in Appendix~\ref{app:proofs}. 

In Theorem~\ref{theorem:pac_triag_transport}, we combine the PAC-Bayesian construction of \cite{London:2016} with the more general concentration of measure theory of \cite{Kontorovich:2017}. 
Crucially, concentration of measure results used in \cite{London:2016} are predicated on the assumption of data generated by a Markov random field \cite{Wainwright:2008aa,Koller:2009aa}.
Our work is more flexible in two major ways. 
\begin{enumerate}[(1)]
\item
Our assumption on the data-generating distribution is likely more representative of real-world data as measure transport models have repeatedly been shown to yield convincing data generators. 
\item
Markov random fields are difficult to handle computationally because inference in general Markov random fields is NP-hard \cite{Wainwright:2008aa} such that one is forced to learn based on approximate inference procedures \cite{Jordan:1999aa,Domke:2013aa,Trajkovska:2017}.
\end{enumerate}
Our work also allows for more general metrics $\rho$ as opposed to the singular choice of Hamming norm required in \cite{London:2016}.
Additionally, the key results of \cite{London:2016} are constructed to ensure all data drawn from the unconditioned distribution $\mu$ are in the good set. This reduces the probability of correctness $1-\delta$ by $m\xi$. Instead, we assume data drawn from $\mu|\mc{B}^c$, effectively reducing the number of available samples by a factor of $1-\xi$, but keeping the probability of correctness high. This allows the set of bad inputs to be used more effectively as a computational tool in Section~\ref{sec:bad_set_bound}.

Comparing the dependency of \eqref{eq:pac_risk_bound} on $d$ with the respective result in \cite{London:2016}, it first appears as though our bound decays with a faster rate ($d$ instead of $\sqrt{d}$). However, this will not typically be the case in practice because $\|D\wt\delta\|_2$ grows with rate $\sqrt{d}$ in most situations. To see this, consider the case of local dependency in the sense that 
\begin{equation}
  L_{ij} = \begin{cases}
    1,\text{ if }j\in \mc{N}_i\ ,\\
    0,\text{ else}
  \end{cases}
\end{equation}
for local neighborhoods $\mc{N}_i \subseteq [d]$ which contain a fixed number of $c$ elements and let $\wt\delta_i = \alpha$ for all $i\in [d]$ and some constant value $\alpha > 0$. Then 
\begin{equation}
  \|D\wt\delta\|_2 = \sqrt{\sum_{i\in [d]} \big(\alpha |\mc{N}_i|\big)^2} = c\alpha\sqrt{d}\ .
\end{equation}
Clearly, if dependence is localized and the oscillations $\wt\delta$ do not decay in $d$, then $\|D\wt \delta\|_2$ contains a factor that grows with rate $\sqrt{d}$, leading to the same asymptotic rate of \eqref{eq:pac_risk_bound} already observed in \cite{London:2016}.

Note that \cite{London:2016} additionally allows for a set of bad hypotheses $\mc{B}_\mc{F}\subseteq \mc{H}$ which do not conform to stability assumptions. In our construction, this means restricting the bound \eqref{eq:oscillation_vector} to oscillations on the set of good hypotheses. We omit this extension for clarity of exposition, but do not expect it to necessitate major changes to the presented proofs. The same applies to the derandomization strategy proposed by \cite{London:2016} which is based on hypothesis stability.

Further, \cite{London:2016} considers a large number of applicable orderings for random variables by introducing a filtration of their index set. This notion is not easily compatible with our assumption of KR-rearrangement, because triangularity of transport depends on the order of variables.
\section{Bounding the Bad Set}\label{sec:bad_set_bound}

With regard to numerical risk certificates, a key technical aspect of Section~\ref{sec:pac_bayes} concerns the quantities $L_{ij}$ in \eqref{eq:triag_transport_stability}. Here, we propose a way to use the set of bad inputs as a computational tool to this end.
Suppose we assign arbitrary fixed values to $L_{ij}$ and subsequently \emph{define} $\mc{B}\subseteq \mc{Z}^d$ as the set of inputs on which the condition \eqref{eq:triag_transport_stability} fails. Then we have fulfilled the prerequisites of Proposition~\ref{prop:local_osciallation} by construction and are left with bounding $\mu(\mc{B})$. Note that
\begin{equation}
  \mu(\mc{B}) = \PP_{Z\sim \mu}(Z\in \mc{B}) = \EE_{Z\sim \mu}[\eins\{Z \in \mc{B}\}]
\end{equation}
and the indicator function $\eins$ assumes values in the bounded set $\{0, 1\}$. Therefore, Hoeffding's inequality gives the following.
\begin{proposition}[\textbf{Upper bound on the bad set}]\label{prop:bad_set_hoeffding}
Let $\mu$ be a data distribution and let $\wt{\mc{D}}_{n} \sim \mu^{n}$ be a sample of size $n$. Fix an error probability $\epsilon \in (0,1)$. Then
\begin{equation}\label{eq:hoeffding-bad-set}
  \mu(\mc{B}) \leq \frac{1}{n} \sum_{Z\in \wt{\mc{D}}_{n}} \eins\{Z\in \mc{B}\} + \sqrt{\frac{1}{2n}\log \frac{2}{\epsilon}}
\end{equation}
with probability at least $1-\epsilon$ over the sample. 
\end{proposition}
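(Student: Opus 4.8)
The plan is to recognize the empirical average on the right-hand side of \eqref{eq:hoeffding-bad-set} as the mean of i.i.d. bounded random variables and apply Hoeffding's inequality directly. Write $\wt{\mc{D}}_{n} = (Z^{(k)})_{k=1}^n$ with the $Z^{(k)}$ drawn i.i.d.\ from $\mu$, and set $X_k = \eins\{Z^{(k)}\in \mc{B}\}$. Since $\mc{B}$ is measurable (in particular when it is \emph{defined} as the set on which the measurable condition \eqref{eq:triag_transport_stability} fails), each $X_k$ is a well-defined $\{0,1\}$-valued random variable with $\EE[X_k] = \PP_{Z\sim\mu}(Z\in \mc{B}) = \mu(\mc{B})$, and the $X_k$ are independent because the $Z^{(k)}$ are.

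Next I would invoke Hoeffding's inequality for the sum of $n$ independent variables each taking values in $[0,1]$: for every $t>0$,
\[
  \PP\Big( \Big| \tfrac{1}{n}\textstyle\sum_{k=1}^n X_k - \mu(\mc{B}) \Big| \geq t \Big) \leq 2\exp(-2nt^2).
\]
The one-sided version of the inequality would already suffice for an upper bound on $\mu(\mc{B})$, but using the two-sided form is what produces the constant $2$ inside the logarithm in \eqref{eq:hoeffding-bad-set}. Setting the right-hand side equal to $\epsilon$ and solving for $t$ yields $t = \sqrt{\tfrac{1}{2n}\log\tfrac{2}{\epsilon}}$.

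Finally I would rearrange: with probability at least $1-\epsilon$ over the draw of $\wt{\mc{D}}_{n}$, the event $\big| \tfrac{1}{n}\sum_k X_k - \mu(\mc{B}) \big| < t$ holds, which in particular implies $\mu(\mc{B}) < \tfrac{1}{n}\sum_{Z\in\wt{\mc{D}}_{n}} \eins\{Z\in\mc{B}\} + \sqrt{\tfrac{1}{2n}\log\tfrac{2}{\epsilon}}$, the claimed bound.

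There is essentially no substantive obstacle here; the statement is a textbook concentration argument for bounded i.i.d.\ variables. The only points requiring (minimal) care are measurability of $\mc{B}$, so that $\eins\{Z\in\mc{B}\}$ is a genuine random variable — which holds by hypothesis — and the bookkeeping choice to use the two-sided Hoeffding bound (or, equivalently, to absorb one factor of $2$) in order to match the $\log(2/\epsilon)$ term exactly as written.
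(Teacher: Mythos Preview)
Your proposal is correct and matches the paper's own argument essentially verbatim: the paper simply observes that $\mu(\mc{B}) = \EE_{Z\sim\mu}[\eins\{Z\in\mc{B}\}]$ with the indicator bounded in $\{0,1\}$ and invokes Hoeffding's inequality, exactly as you do.
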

Checking the condition $Z\in \mc{B}$ requires evaluating \eqref{eq:triag_transport_stability} which comes down to finding a Lipschitz constant for a one-dimensional function. If this is computationally feasible for the given data model, then the concentration argument \eqref{eq:hoeffding-bad-set} can be used to bound $\mu(\mc{B})$ with high probability. Because \eqref{eq:hoeffding-bad-set} decays only in the number of structured examples $\mc{O}(\sqrt{n})$, it can not be used to show generalization from a single structured example. However, PAC-Bayesian risk certificates are typically dominated by the KL complexity term in \eqref{eq:pac_risk_bound} which decays with the size of structured examples as well. Thus, Proposition~\ref{prop:bad_set_hoeffding} should still be useful in practice.

Note that Proposition~\ref{prop:bad_set_hoeffding} makes a pointwise statement about a fixed value of $L_{ij}$ which has limited utility for learning $L_{ij}$ from data. To remedy this problem, we can first define a discrete set $\mc{L} = (L^{(k)})_{k\in [l]}$ of candidate matrices and select error probabilities $\epsilon_k$ for each of the events 
\begin{equation}\label{eq:bad-set-event}
  \mu(\mc{B}(L^{(k)})) \geq \frac{1}{n} \sum_{Z\in \wt{\mc{D}}_{n}} \eins\{Z\in \mc{B}(L^{(k)})\} + \sqrt{\frac{1}{2n}\log \frac{2}{\epsilon_k}}
\end{equation}
such that $\sum_{k \in [l]} \epsilon_k = \epsilon$. Then, by a union bound with probability at least $1-\epsilon$ over the sample none of the events \eqref{eq:bad-set-event} occurs.
We have thus constructed a uniform bound over the set of candidate matrices which allows us to select the one which minimizes the generalization gap in \eqref{eq:pac_risk_bound}.

In order to make this strategy most effictive, domain knowledge on the application at hand should be applied when constructing candidate matrices and assigning error probabilities. For instance, the limited empirical findings of \cite{Ciliberto:2019} on ImageNet \cite{Deng:2009} indicate that the majority of natural images contain mostly local signal. In image segmentation, this is conducive to concentration, because it can lead to many small values in an optimal Wasserstein dependency matrix. In particular, if dependency decays with distance in the image domain, one should select configurations $L^{(k)}$ in which $L^{(k)}_{ij}$ is small if $i$ is distant from $j$ in the image domain and allowed to assume larger values if $i$ is close to $j$ in the image domain.

We give an \textit{intuitive interpretation} of the relationship between Proposition~\ref{prop:bad_set_hoeffding} and Theorem~\ref{theorem:pac_triag_transport} as follows. 
Suppose the majority of samples from a structured data distribution contain mostly local signal. The locality of signal in samples indicates weak global dependence of random variables which in turn manifests in small entries of a Wasserstein dependency matrix. However, a small number of bad data may contain only weak local signal. For instance, an image in which every pixel has the same value does not give more information to a learner if it is doubled in size. Even worse, a small (but not null) set of bad data will dominate the Wasserstein dependency matrix and prevent generalization that scales with $d$. Proposition~\ref{prop:bad_set_hoeffding} thus estimates an upper bound on the likelihood of bad data under $\mu$ which are then excluded from the concentration argument underlying the bound \eqref{eq:pac_risk_bound}.
This relationship is further illustrated in a numerical toy example in Appendix~\ref{app:toy_example}.
\section{Limitations}\label{sec:limitations}

We assume that structured data are drawn from a distribution $\mu$ which arises from a reference measure through KR-rearrangement. This is motivated by the fact that a large class of data distributions can be represented in this way and that some assumption on the data-generating distribution is required in order to make statements on conditional distributions required to quantify dependence. However, it is an open question how closely a measure transport distribution learned from data approximates the unknown distribution from which the data are drawn. A first step towards this goal is made in the recent work \cite{Baptista:2023}, but a non-asymptotic theory of generalization for generative models is left for future work. Accordingly, we do not present any empirical results on real-world data. 

We describe how dependency in the data distribution which is relevant for generalization of discriminative models can be written as properties of the KR-rearrange\-ment through a Wasserstein dependency matrix. Section~\ref{sec:bad_set_bound} further outlines how the construction of bad data can be used as a computational tool to this end. We do not cover how to compute the involved Lipschitz constants $L_{ij}$ in \eqref{eq:triag_transport_stability}. In practice, this will require nontrivial numerical machinery because a deterministic bound on the expected value under the reference distribution appears needed. One possible approach is to employ quasi-Monte-Carlo methods \cite{Dick:2010, Dick:2013} which come with deterministic error bounds and have recently been applied to PAC-Bayesian certification of ordinary (non-structured) classification risk \cite{Boll:2023ab}. This entails a fine-grained stability analysis of the involved integrands and is likely to be computationally expensive because deterministic error bounds tend to be pessimistic compared to their stochastic counterparts.

\section{Conclusion}\label{sec:conclusion}

We have presented a PAC-Bayesian risk certificate for structured prediction. Compared to earlier work, we make the assumption of data generated by KR-rearrangement of a reference measure. This approach is able to represent all atom-free data distributions and yields an explicit quantification of dependence via Wasserstein dependency matrices. It also indicates a way of leveraging powerful generative models to compute risk certificates for downstream discriminative tasks.

\subsection*{Acknowledgements} 
This work is funded by the Deutsche Forschungsgemeinschaft (DFG), grant SCHN 457/17-1, within the priority programme SPP 2298: ``Theoretical Foundations of Deep Learning''.
This work is funded by the Deutsche Forschungsgemeinschaft (DFG) under Germany's Excellence Strategy EXC-2181/1 - 390900948 (the Heidelberg STRUCTURES Excellence Cluster).

\bibliographystyle{amsalpha}
\bibliography{StructuredPrediction}

\newcommand{\etalchar}[1]{$^{#1}$}
\providecommand{\bysame}{\leavevmode\hbox to3em{\hrulefill}\thinspace}
\providecommand{\MR}{\relax\ifhmode\unskip\space\fi MR }
\providecommand{\MRhref}[2]{%
  \href{http://www.ams.org/mathscinet-getitem?mr=#1}{#2}
}
\providecommand{\href}[2]{#2}
\begin{thebibliography}{CGPOST20}

\bibitem[AG18]{Alquier:2018}
Pierre Alquier and Benjamin Guedj, \emph{Simpler pac-bayesian bounds for
  hostile data}, Machine Learning \textbf{107} (2018), no.~5, 887--902.

\bibitem[Alq21]{Alquier:2021}
Pierre Alquier, \emph{User-friendly introduction to pac-bayes bounds}, arXiv
  preprint arXiv:2110.11216 (2021).

\bibitem[Alq23]{Alquier:2023}
Pierre Alquier, \emph{User-friendly introduction to pac-bayes bounds}, 2023.

\bibitem[BGT{\etalchar{+}}17]{Bousquet:2017aa}
O.~Bousquet, S.~Gelly, I.~Tolstikhin, C.-J Simon-Gabriel, and B.~Sch{\"o}lkopf,
  \emph{{From optimal transport to generative modeling: the VEGAN cookbook}},
  preprint arXiv:1705.07642 (2017).

\bibitem[BHK{\etalchar{+}}23]{Baptista:2023}
Ricardo Baptista, Bamdad Hosseini, Nikola~B Kovachki, Youssef~M Marzouk, and
  Amir Sagiv, \emph{An approximation theory framework for measure-transport
  sampling algorithms}, arXiv preprint arXiv:2302.13965 (2023).

\bibitem[BKM05]{Bogachev:2005}
V~I Bogachev, A~V Kolesnikov, and K~V Medvedev, \emph{Triangular
  transformations of measures}, Sbornik: Mathematics \textbf{196} (2005),
  no.~3, 309.

\bibitem[BLM13]{Boucheron:2013}
St{\'e}phane Boucheron, G{\'a}bor Lugosi, and Pascal Massart,
  \emph{Concentration {Inequalities}: {A} {Nonasymptotic} {Theory} of
  {Independence}}, Oxford University Press, February 2013 (en).

\bibitem[BMZ22]{Baptista:2022}
Ricardo Baptista, Youssef Marzouk, and Olivier Zahm, \emph{On the
  representation and learning of monotone triangular transport maps}, July
  2022, arXiv:2009.10303 [cs, math, stat].

\bibitem[Bon13]{Bonnotte:2013aa}
N.~Bonnotte, \emph{{From Knothe's Rearrangement to Brenier's Optimal Transport
  Map}}, SIAM J. Math. Anal. \textbf{45} (2013), no.~1, 64--87.

\bibitem[BZPS23]{Boll:2023ab}
B.~Boll, A.~Zeilmann, S.~Petra, and C.~Schn\"{o}rr, \emph{{Self-Certifying
  Classification by Linearized Deep Assignment}}, PAMM: Proc. Appl. Math. Mech.
  \textbf{23} (2023), no.~1, e202200169.

\bibitem[Cat07]{Catoni:2007aa}
O.~Catoni, \emph{{PAC-Bayesian Supervised Classification: The Thermodynamics of
  Statistical Learning}}, Institute of Mathematical Statistics, 2007.

\bibitem[CBR19]{Ciliberto:2019}
Carlo Ciliberto, Francis Bach, and Alessandro Rudi, \emph{Localized
  {Structured} {Prediction}}, Advances in {Neural} {Information} {Processing}
  {Systems} (H.~Wallach, H.~Larochelle, A.~Beygelzimer, F.~d' Alch{\'e}-Buc,
  E.~Fox, and R.~Garnett, eds.), vol.~32, Curran Associates, Inc., 2019.

\bibitem[CDD22]{Clerico:2022}
Eugenio Clerico, George Deligiannidis, and Arnaud Doucet, \emph{Conditionally
  gaussian pac-bayes}, International Conference on Artificial Intelligence and
  Statistics, PMLR, 2022, pp.~2311--2329.

\bibitem[CGPOST20]{Cantelobre:2020}
Th{\'e}ophile Cantelobre, Benjamin Guedj, Mar{\'\i}a P{\'e}rez-Ortiz, and John
  Shawe-Taylor, \emph{A pac-bayesian perspective on structured prediction with
  implicit loss embeddings}, arXiv preprint arXiv:2012.03780 (2020).

\bibitem[CGS10]{Carlier:2010aa}
G.~Carlier, A.~Galichon, and F.~Santambrogio, \emph{{From Knothe's Transport to
  Brenier's Map and a Continuation Method for Optimal Transport}}, SIAM Journal
  on Mathematical Analysis \textbf{41} (2010), no.~6, 2554--2576.

\bibitem[CRR20]{Ciliberto:2020}
Carlo Ciliberto, Lorenzo Rosasco, and Alessandro Rudi, \emph{A general
  framework for consistent structured prediction with implicit loss
  embeddings}, The Journal of Machine Learning Research \textbf{21} (2020),
  no.~1, 3852--3918.

\bibitem[DDS{\etalchar{+}}09]{Deng:2009}
Jia Deng, Wei Dong, Richard Socher, Li-Jia Li, Kai Li, and Li~Fei-Fei,
  \emph{Imagenet: A large-scale hierarchical image database}, 2009 IEEE
  conference on computer vision and pattern recognition, IEEE, 2009,
  pp.~248--255.

\bibitem[DKB15]{Dinh:2015}
Laurent Dinh, David Krueger, and Yoshua Bengio, \emph{{NICE}: {Non}-linear
  {Independent} {Components} {Estimation}}, April 2015, arXiv:1410.8516 [cs].

\bibitem[DKS13]{Dick:2013}
Josef Dick, Frances~Y Kuo, and Ian~H Sloan, \emph{High-dimensional integration:
  the quasi-monte carlo way}, Acta Numerica \textbf{22} (2013), 133--288.

\bibitem[Dom13]{Domke:2013aa}
J.~Domke, \emph{{Learning Graphical Model Parameters with Approximate Marginal
  Inference}}, IEEE Trans. Pattern Analysis and Machine Intelligence
  \textbf{35} (2013), no.~10, 2454--2467.

\bibitem[DP10]{Dick:2010}
Josef Dick and Friedrich Pillichshammer, \emph{Digital nets and sequences:
  discrepancy theory and quasi--monte carlo integration}, Cambridge University
  Press, 2010.

\bibitem[DR18]{Dziugaite:2018wc}
Gintare~Karolina Dziugaite and Daniel~M Roy, \emph{Data-dependent pac-bayes
  priors via differential privacy}, Advances in Neural Information Processing
  Systems, NIPS, vol.~31, Curran Associates, Inc., 2018.

\bibitem[DSDB17]{Dinh:2017}
Laurent Dinh, Jascha Sohl-Dickstein, and Samy Bengio, \emph{Density estimation
  using real nvp}, 2017.

\bibitem[Fie93]{Fiebig:1993}
Doris Fiebig, \emph{Mixing properties of a class of bernoulli-processes},
  Transactions of the American Mathematical Society \textbf{338} (1993), no.~1,
  479--493.

\bibitem[F{\"o}l79]{Foellmer:1979}
H.~F{\"o}llmer, \emph{Tail structure of markov chains on infinite product
  spaces}, Zeitschrift f{\"u}r Wahrscheinlichkeitstheorie und Verwandte Gebiete
  \textbf{50} (1979), no.~3, 273--285.

\bibitem[GP21]{Guedj:2021}
Benjamin Guedj and Louis Pujol, \emph{Still {No} {Free} {Lunches}: {The}
  {Price} to {Pay} for {Tighter} {PAC}-{Bayes} {Bounds}}, Entropy \textbf{23}
  (2021), no.~11, 1529 (en).

\bibitem[GPC17]{Genevay:2017aa}
A.~Genevay, G.~Peyr\'{e}, and M.~Cuturi, \emph{{GAN and VAE from an Optimal
  Transport Point of View}}, preprint arXiv:1706.01807 (2017).

\bibitem[Gue19]{Guedj:2019tu}
B.~Guedj, \emph{{A Primer on PAC-Bayesian Learning}}, CoRR abs/1901.05353
  (2019).

\bibitem[HGRST21]{Haddouche:2021}
Maxime Haddouche, Benjamin Guedj, Omar Rivasplata, and John Shawe-Taylor,
  \emph{Pac-bayes unleashed: Generalisation bounds with unbounded losses},
  Entropy \textbf{23} (2021), no.~10, 1330.

\bibitem[HSC{\etalchar{+}}22]{Ho:2022aa}
J.~Ho, C.~Saharia, W.~Chan, D.~J. Fleet, M.~Norouzi, and T.~Salimans,
  \emph{{Cascaded Diffusion Models for High Fidelity Image Generation}}, J.
  Machine Learning Research \textbf{23} (2022), 1--33.

\bibitem[JGS99]{Jordan:1999aa}
M.~I. Jordan, T.~S. Ghahramani, Z. abd~Jaakkola, and L.~K. Saul, \emph{{An
  Introduction to Variational Methods for Graphical Models}}, Machine Learning
  \textbf{37} (1999), 183--233.

\bibitem[KF09]{Koller:2009aa}
D.~Koller and N.~Friedman, \emph{{Probabilistic Graphical Models: Principles
  and Techniques}}, MIT Press, 2009.

\bibitem[Kno57]{Knothe:1957}
Herbert Knothe, \emph{Contributions to the theory of convex bodies.}, Michigan
  Mathematical Journal \textbf{4} (1957), no.~1, 39--52.

\bibitem[KPB21]{Kobyzev:2019aa}
I.~Kobyzev, S.J.~D. Prince, and M.~A. Brubaker, \emph{{Normalizing Flows: An
  Introduction and Review of Current Methods}}, IEEE Trans. Pattern Anal. Mach.
  Intell. \textbf{43} (2021), no.~11, 3964--3979.

\bibitem[KR08]{Kontorovich:2008}
Leonid~(Aryeh) Kontorovich and Kavita Ramanan, \emph{Concentration inequalities
  for dependent random variables via the martingale method}, The Annals of
  Probability \textbf{36} (2008), no.~6, 2126--2158.

\bibitem[KR17]{Kontorovich:2017}
Aryeh Kontorovich and Maxim Raginsky, \emph{Concentration of {Measure}
  {Without} {Independence}: {A} {Unified} {Approach} {Via} the {Martingale}
  {Method}}, Convexity and {Concentration} (Eric Carlen, Mokshay Madiman, and
  Elisabeth~M. Werner, eds.), vol. 161, Springer New York, New York, NY, 2017,
  Series Title: The IMA Volumes in Mathematics and its Applications,
  pp.~183--210 (en).

\bibitem[Led01]{Ledoux:2001aa}
M.~Ledoux, \emph{{The Concerntration of Measure Phenomenon}}, Amer. Math. Soc.,
  2001.

\bibitem[LHG16]{London:2016}
Ben London, Bert Huang, and Lise Getoor, \emph{Stability and generalization in
  structured prediction}, Journal of Machine Learning Research \textbf{17}
  (2016), no.~221, 1--52.

\bibitem[LHTG13]{London:2013a}
Ben London, Bert Huang, Ben Taskar, and Lise Getoor, \emph{Collective stability
  in structured prediction: Generalization from one example}, International
  Conference on Machine Learning, PMLR, 2013, pp.~828--836.

\bibitem[LHTG14]{London:2014}
\bysame, \emph{Pac-bayesian collective stability}, Artificial Intelligence and
  Statistics, PMLR, 2014, pp.~585--594.

\bibitem[McA99a]{McAllester:1999pac}
David~A McAllester, \emph{Pac-bayesian model averaging}, Proceedings of the
  twelfth annual conference on Computational learning theory, 1999,
  pp.~164--170.

\bibitem[McA99b]{McAllester:1999}
David~A. McAllester, \emph{Some pac-bayesian theorems}, Machine Learning
  \textbf{37} (1999), no.~3, 355--363.

\bibitem[MLLK21]{Mustafa:2021}
Waleed Mustafa, Yunwen Lei, Antoine Ledent, and Marius Kloft,
  \emph{Fine-grained analysis of structured output prediction}, International
  Joint Conferences on Artificial Intelligence, 2021.

\bibitem[MMPS16]{Marzouk:2016}
Youssef Marzouk, Tarek Moselhy, Matthew Parno, and Alessio Spantini,
  \emph{Sampling via {Measure} {Transport}: {An} {Introduction}}, Handbook of
  {Uncertainty} {Quantification} (Roger Ghanem, David Higdon, and Houman
  Owhadi, eds.), Springer International Publishing, Cham, 2016, pp.~1--41.

\bibitem[OF03]{Osher:2004}
Stanley Osher and Ronald Fedkiw, \emph{Level set methods and dynamic implicit
  surfaces}, Springer, 2003.

\bibitem[PNR{\etalchar{+}}21]{Papamakarios:2021vu}
G.~Papamakarios, E.~Nalisnick, D.J. Rezende, S.~Mohamed, and
  B.~Lakshminarayanan, \emph{{Normalizing Flows for Probabilistic Modeling and
  Inference}}, J. Machine Learning Research \textbf{22} (2021), no.~57, 1--64.

\bibitem[PORG{\etalchar{+}}21]{Perez:2021a}
Maria Perez-Ortiz, Omar Rivasplata, Benjamin Guedj, Matthew Gleeson, Jingyu
  Zhang, John Shawe-Taylor, Miroslaw Bober, and Josef Kittler, \emph{Learning
  pac-bayes priors for probabilistic neural networks}, arXiv preprint
  arXiv:2109.10304 (2021).

\bibitem[PORSTS21]{Perez:2021}
Mar{\i}a P{\'e}rez-Ortiz, Omar Rivasplata, John Shawe-Taylor, and Csaba
  Szepesv{\'a}ri, \emph{{Tighter Risk Certificates for Neural Networks}},
  Journal of Machine Learning Research \textbf{22} (2021), no.~227, 1--40.

\bibitem[RH21]{Ruthotto:2021tv}
L.~Ruthotto and E.~Haber, \emph{{An Introduction to Deep Generative Modeling}},
  GAMM Mitt. \textbf{44} (2021), no.~2, 24 pages.

\bibitem[Ros52]{Rosenblatt:1952}
Murray Rosenblatt, \emph{Remarks on a multivariate transformation}, The annals
  of mathematical statistics \textbf{23} (1952), no.~3, 470--472.

\bibitem[SSDK{\etalchar{+}}21]{Song:2021aa}
Y.~Song, J.~Sohl-Dickstein, D.~P. Kingma, A.~Kumar, S.~Ermon, and B.~Poole,
  \emph{{Score-Based Generative Modeling Through Stochastic Differential
  Equations}}, ICLR, 2021.

\bibitem[STW97]{Shawe:1997}
John Shawe-Taylor and Robert~C Williamson, \emph{A pac analysis of a bayesian
  estimator}, Proceedings of the tenth annual conference on Computational
  learning theory, 1997, pp.~2--9.

\bibitem[TS{\AA}P17]{Trajkovska:2017}
Vera Trajkovska, Paul Swoboda, Freddie {\AA}str{\"o}m, and Stefania Petra,
  \emph{Graphical model parameter learning by inverse linear programming},
  Scale Space and Variational Methods in Computer Vision (Cham) (Fran{\c{c}}ois
  Lauze, Yiqiu Dong, and Anders~Bjorholm Dahl, eds.), Springer International
  Publishing, 2017, pp.~323--334.

\bibitem[TT13]{Tabak:2013}
Esteban~G Tabak and Cristina~V Turner, \emph{A family of nonparametric density
  estimation algorithms}, Communications on Pure and Applied Mathematics
  \textbf{66} (2013), no.~2, 145--164.

\bibitem[TT18]{Trippe:2018}
Brian~L. Trippe and Richard~E. Turner, \emph{Conditional {Density} {Estimation}
  with {Bayesian} {Normalising} {Flows}}, February 2018, arXiv:1802.04908
  [stat].

\bibitem[TVE10]{Tabak:2010}
Esteban~G Tabak and Eric Vanden-Eijnden, \emph{Density estimation by dual
  ascent of the log-likelihood}, Communications in Mathematical Sciences
  \textbf{8} (2010), no.~1, 217--233.

\bibitem[WJ08]{Wainwright:2008aa}
M.J. Wainwright and M.I. Jordan, \emph{{Graphical Models, Exponential Families,
  and Variational Inference}}, Found. Trends Mach. Learn. \textbf{1} (2008),
  no.~1-2, 1--305.

\end{thebibliography}

\clearpage

\begin{appendix}

\section{Full Proofs of Presented Results}\label{app:proofs}

In this appendix, we present full proofs of all results which are not already complete in the main text.

\paragraph{Proof of Theorem~\ref{theorem:mgf_kontorovich}}

For any $i\in [d]$ and $z\in \mc{Z}^d$ define
\begin{equation}\label{eq:doob_margingale_diff_seq}
	M^{(i)} = \EE_{Z\sim \mu}[f(Z)|\mc{B}^c, Z^{[i]}=z^{[i]}] - \EE_{Z\sim \mu}[f(Z)|\mc{B}^c, Z^{[i-1]}=z^{[i-1]}]
\end{equation}
with the edge case
\begin{equation}
  M^{(1)} = \EE_{Z\sim \mu}[f(Z)|\mc{B}^c, Z_1=z_1] - \EE_{Z\sim \mu}[f(Z)|\mc{B}^c]\ .
\end{equation}
Due to $\EE_{Z\sim \mu}[f(Z)|\mc{B}^c,Z = z] = f(z)$ for $z\in \mc{B}^c$ we have
\begin{equation}
	f-\EE_{Z\sim \mu}[f(Z)|\mc{B}^c] = \sum_{i=1}^d M^{(i)}\ .
\end{equation}
Since the conditions $Z^{[i]}=z^{[i]}$ generate a nested sequence of $\sigma$-algebras, the quantities $K^{(i+1)}f(z) = \EE_\mu[f(Z)|\mc{B}^c, Z^{[i]}=z^{[i]}]$ are a Doob martingale and \eqref{eq:doob_margingale_diff_seq} is a martingale difference sequence. In order to bound the moment generating function of $f$, we will bound every $M^{(i)}$ from above and below and apply the Azuma-Hoeffding theorem~\ref{theorem:azuma-hoeffding}. 
We have
\begin{subequations}
\begin{align}
	M^{(i)} &= \EE_\mu[f(Z)|\mc{B}^c, Z^{[i]}=z^{[i]}] - \EE_\mu[f(Z)|\mc{B}^c, Z^{[i-1]}=z^{[i-1]}]\\
	&=\EE_\mu[f(Z)|\mc{B}^c, Z^{[i]}=z^{[i]}] \nonumber\\
  &\qquad- \EE_\mu[\EE_\mu[f(Z)|\mc{B}^c, Z^{[i-1]}=z^{[i-1]}, Z_i]|\mc{B}^c, Z^{[i-1]}=z^{[i-1]}]\\
	&= \int f(z^{[i]}\yp^{(i,d]})\mu(d\yp^{(i,d]}|z^{[i]},\mc{B}^c)\nonumber\\
	&\qquad	- \int \Big(\int f(z^{[i]}u^{(i,d]})\mu(du^{(i,d]}|z^{[i-1]},\yp_i,\mc{B}^c)\Big)\mu(d\yp^{[i,d]}|z^{[i-1]},\mc{B}^c)
\end{align}
\end{subequations}
by the tower property of conditional expectations.
Because $\mu(d\yp^{[i,d]}|z^{[i-1]},\mc{B}^c)$ is a probability measure, it holds
\begin{align}
	\int f(&z^{[i]}\yp^{(i,d]})\mu(d\yp^{(i,d]}|z^{[i]},\mc{B}^c)\nonumber\\
  &= \int \Big(\int f(z^{[i-1]}z_iu^{(i,d]})\mu(du^{(i,d]}|z^{[i]},\mc{B}^c)\Big)\mu(d\yp^{[i,d]}|z^{[i-1]},\mc{B}^c)
\end{align}
and we find
\begin{align}
	M^{(i)} = \int \mu(d\yp^{[i,d]}|z^{[i-1]},\mc{B}^c) \Big( 
		\int f(&z^{[i-1]}z_iu^{(i,d]})\mu(du^{(i,d]}|z^{[i]},\mc{B}^c)\nonumber\\
		- &\int f(z^{[i]}u^{(i,d]})\mu(du^{(i,d]}|z^{[i-1]},\yp_i,\mc{B}^c)
	\Big)\
\end{align}
Now bound $A^{(i)}\leq M^{(i)} \leq B^{(i)}$ almost surely with
\begin{subequations}
\begin{align}
	A^{(i)} &= \int \mu(d\yp^{[i,d]}|z^{[i-1]},\mc{B}^c) \inf_{z_i\in \mc{B}^c_i(z^{[i-1]})}\Big( 
		\int f(z^{[i-1]}z_iu^{(i,d]})\mu(du^{(i,d]}|z^{[i]},\mc{B}^c)\nonumber\\
		&\hspace{5cm}- \int f(z^{[i]}u^{(i,d]})\mu(du^{(i,d]}|z^{[i-1]},\yp_i,\mc{B}^c)
	\Big)\\
	B^{(i)} &= \int \mu(d\yp^{[i,d]}|z^{[i-1]},\mc{B}^c) \sup_{z_i\in \mc{B}^c_i(z^{[i-1]})}\Big( 
		\int f(z^{[i-1]}z_iu^{(i,d]})\mu(du^{(i,d]}|z^{[i]},\mc{B}^c)\nonumber\\
		&\hspace{5cm}- \int f(z^{[i]}u^{(i,d]})\mu(du^{(i,d]}|z^{[i-1]},\yp_i,\mc{B}^c)
	\Big)
\end{align}
\end{subequations}
where $\mc{B}^c_i(z^{[i-1]})$ contains all $z_i\in \mc{Z}$ such that there exist $z^{(i,d]}\in \mc{Z}^{d-i}$ with $(z^{[i-1]}, z_i, z^{(i, d]})\in \mc{B}^c$. 
Because every realization of a random variable conditioned on $\mc{B}^c$ is in the set of good inputs, the difference $\|B^{(i)}-A^{(i)}\|_\infty$ can be written as
\begin{align}
\sup_{\zp, z\in \mc{B}^c, \zp^{[d]\setminus \{i\}}=z^{[d]\setminus \{i\}}}
		\int &f(\zp^{[i]}u^{(i,d]})\mu(du^{(i,d]}|\zp^{[i]},\mc{B}^c)\nonumber\\
    &- \int f(z^{[i]}u^{(i,d]})\mu(du^{(i,d]}|z^{[i]},\mc{B}^c)
\end{align}
By seeing this expression in terms of oscillation of the kernel action $K^{(i+1)}f$, we find
\begin{equation}
\|B^{(i)}-A^{(i)}\|_\infty 
	\leq \|\rho\| \delta_i(K^{(i+1)}\wt f)
	\leq \|\rho\| (V^{(i+1)}\delta(\wt f))_i
	= (\Gamma \delta(\wt f))_i
\end{equation}
where $\wt f\colon \mc{B}^c\to \R$ is the restriction of $f$ to $\mc{B}^c$.
The assertion then follows from the Azuma-Hoeffding theorem \cite[Theorem~4.1]{Kontorovich:2017} which we recite as Theorem~\ref{theorem:azuma-hoeffding} to make this paper self-contained.

\paragraph{Proof of Proposition~\ref{prop:local_osciallation}}

For arbitrary $z,z'\in\mc{Z}^d$ it holds
\begin{equation}
|f(z) - f(z')| \leq \delta_j(f)\rho(z_j, z'_j),\qquad \forall\,i\in [d]
\end{equation}
and thus, by summing over all indices we get
\begin{equation}
  |f(z) - f(z')| \leq \frac{1}{d}\sum_{j\in [d]} \delta_j(f)\rho(z_j, z'_j)
\end{equation}
Let $\zp,z\in\mc{Z}^d$ with $\zp^{[d]\setminus \{i\}} = z^{[d]\setminus \{i\}}$ be given for some $i\in[d]$. 
Recall the action \eqref{eq:markov_kernel_action} of Markov kernels $K^{(i+1)}$ is an expected value with respect to conditional distributions $\mu^{(i,d]}(d\yp^{(i,d]}|\zp^{[i]})$.

Because $\nu^d$ has no atoms, $\nu^d|\mc{A}^c$ also has no atoms. Therefore, there is a unique KR-rearrangement $\wh T$ with $\wh T_\sharp \nu^d = \nu^d|\mc{A}^c$. Then $\wt T = T\circ \wh T$ is a KR-rearrangement with
\begin{equation}
  \wt T_\sharp \nu^d = \mu|\mc{B}^c
\end{equation}
by Lemma~\ref{lem:conditioned_transport} and we have $\wt T(\wh \zp) = \zp$.
Lemma~\ref{lem:conditional_repr} implies
\begin{equation}
  \mu^{(i,d]}(d\yp^{(i,d]}|\mc{B}^c, \zp^{[i]}) = \wt T(\wh \zp^{[i]},\cdot)_\sharp \nu^{d-i}
\end{equation}

An analogous expression holds for the distribution conditioned on $z$. We have therefore found two transport functions pushing the reference measure to the respective conditional distributions. By Lemma~\ref{lem:coupling_conditional}, a coupling of the conditional distributions is then given by
\begin{equation}
  P_{\zp,z}^{[i]} = (\wt T^{(i, d]}(\wh \zp^{[i]}, \cdot), \wt T^{(i, d]}(\wh z^{[i]}, \cdot))_\sharp \nu^{d-i}\ .
\end{equation}
Using a change of measure we find
\begin{align}
K^{(i+1)}&f(\zp) - K^{(i+1)}f(z) \nonumber\\
  &=\int P_{\zp,z}^{[i]}(du^{(i,d]}, dv^{(i,d]})\big(f(\zp^{[i]}u^{(i,d]}) - f(z^{[i]}v^{(i,d]})\big)\\
  &= \int \big(f(\zp^{[i]}\wt T^{(i,d]}(\wh \zp^{[i]},\tau)) - f(z^{[i]}\wt T^{(i,d]}(\wh z^{[i]},\tau))\big) \nu^{d-i}(\tau)\\
  &\leq \frac{\delta_i(f)}{d}\rho(\zp_i,z_i) + \sum_{j\in (i,d]} \frac{\delta_j(f)}{d} \int \rho\big(\wt T^{(i,d]}(\wh \zp^{[i]},\tau)_j, \wt T^{(i,d]}(\wh z^{[i]},\tau)_j\big) \nu^{d-i}(\tau)\\
  &\leq \frac{\delta_i(f)}{d}\rho(\zp_i,z_i) + \sum_{j\in (i,d]} \frac{\delta_j(f)}{d} L_{ij}\rho(\zp_i, z_i)
\end{align}
which shows
\begin{align}
  \delta_i(K^{(i+1)}f) \leq \frac{1}{d}\Big(\delta_i(f) + \sum_{j\in (i,d]} L_{ij}\delta_j(f)\Big)
\end{align}
for good inputs.
We have thus found a Wasserstein matrix $V^{(i+1)}$ for $K^{(i+1)}$ with entries
\begin{equation}
	V^{(i+1)}_{ij} = \begin{cases}
		0 & \text{if }i>j\\
		d^{-1} & \text{if }i = j\\
		d^{-1} L_{ij} & \text{if }i < j
	\end{cases}
\end{equation}
in row $i$ which shows the assertion.

\paragraph{Proof of Theorem~\ref{theorem:pac_triag_transport}}
For any hypothesis $h\in\mc{H}$, we have
\begin{subequations}
\begin{align}
  \mc{R}(h) - \mc{R}_m(h,\mc{D}_{m}) &= \EE_{Z\sim \mu}[L(h,Z)-\mc{R}_m(h,\mc{D}_{m})]\\
  &= \EE_{Z\sim \mu}\left[\Big(L(h,Z)-\mc{R}_m(h,\mc{D}_{m})\Big)\eins\{Z\notin \mc{B}\}\right] \nonumber\\
  &\hspace{1cm}+ \EE_{Z\sim \mu}\left[\Big(L(h,Z)-\mc{R}_m(h,\mc{D}_{m})\Big)\eins\{Z\in \mc{B}\}\right]\\
  &\leq \EE_{Z\sim \mu}\left[\Big(L(h,Z)-\mc{R}_m(h,\mc{D}_{m})\Big)\eins\{Z\notin \mc{B}\}\right] + \xi\label{eq:pac_triag_proof1}\\
  &\leq \EE_{Z\sim \mu|\mc{B}^c}[L(h,Z)]-\mc{R}_m(h,\mc{D}_{m}) + \xi
\end{align}
\end{subequations}
where in \eqref{eq:pac_triag_proof1} we have used that pointwise loss is in $[0,1]$. Note that the underlying distribution of the risk $\mc{R}(h)$ is $\mu$, while $\mc{D}_{m}$ are drawn from $\mu|\mc{B}^c$. The above inequality reconciles this such that a concentration argument for the conditional distribution becomes applicable.
For any PAC-Bayes posterior distribution $\zeta$ and any $\beta > 0$, this implies
\begin{subequations}\label{eq:pac_triag_proof_donsker}
\begin{align}
  \mc{R}(\zeta) - \mc{R}_m(\zeta,\mc{D}_{m}) &= \EE_{h\sim\zeta}\EE_{Z\sim \mu}[L(h,Z)-\mc{R}_m(h,\mc{D}_{m})]\\
  &\leq \EE_{h\sim\zeta}\big[\EE_{Z\sim \mu|\mc{B}^c}[L(h,Z)]-\mc{R}_m(h,\mc{D}_{m})\big] + \xi\\
  &= \frac{1}{\beta}\EE_{h\sim\zeta}\big[\beta(\EE_{Z\sim \mu|\mc{B}^c}[L(h,Z)]-\mc{R}_m(h,\mc{D}_{m}))\big] + \xi\\
  &\leq \frac{1}{\beta}\log \EE_{h\sim\pi}\Big[\exp\big(\beta(\EE_{Z\sim \mu|\mc{B}^c}[L(h,Z)]-\mc{R}_m(h,\mc{D}_{m}))\big)\Big] \nonumber\\
  &\hspace{3cm}+ \frac{1}{\beta}\KL[\zeta:\pi] + \xi
\end{align}
\end{subequations}
by Donsker and Varadhan's variational formula \cite[Lemma~2.2]{Alquier:2023}. Focusing on the first term, we find
\begin{subequations}
\begin{align}
  \exp\big(\beta(\EE_{Z\sim \mu|\mc{B}^c}&[L(h,Z)]-\mc{R}_m(h,\mc{D}_{m}))\big)\nonumber\\
  &= \exp\Big(\frac{\beta}{m}\sum_{k\in[m]}\big(\EE_{Z\sim \mu|\mc{B}^c}[L(h,Z)]-L(h,Z^{(k)})\big)\Big)\\
  &= \prod_{k\in[m]} \exp\Big(\frac{\beta}{m}\big(\EE_{Z\sim \mu|\mc{B}^c}[L(h,Z)]-L(h,Z^{(k)})\big)\Big)
\end{align}
\end{subequations}
Each structured datum $Z^{(k)}$ is drawn independently from $\mu|\mc{B}^c$.
By Proposition~\ref{prop:local_osciallation} there exists a Wasserstein dependency matrix $\Gamma = \frac{\|\rho\|}{d}D$ for $\mu|\mc{B}^c$ where $D$ has entries \eqref{eq:def_D_matrix}.
Then
\begingroup
\allowdisplaybreaks
\begin{subequations}\label{eq:pac_triag_proof_mgf}
\begin{align}
  &\EE_{\mc{D}_{m}\sim (\mu|\mc{B}^c)^m}\prod_{k\in[m]} \exp\Big(\frac{\beta}{m}\big(\EE_{Z\sim \mu|\mc{B}^c}[L(h,Z)]-L(h,Z^{(k)})\big)\Big)\nonumber\\
  &\hspace{1cm}= \prod_{k\in[m]} \EE_{Z^{(k)}\sim (\mu|\mc{B}^c)} \exp\Big(\frac{\beta}{m}\big(\EE_{Z\sim \mu|\mc{B}^c}[L(h,Z)]-L(h,Z^{(k)})\big)\Big)\\
  &\hspace{1cm}= \prod_{k\in[m]} \EE_{Z^{(k)}\sim \mu|\mc{B}^c}\Big[\exp\Big(\frac{\beta}{m}\big(\EE_{Z\sim \mu|\mc{B}^c}[L(h,Z)]-L(h,Z^{(k)})\big)\Big)\Big]\\
  &\hspace{1cm}\leq \prod_{k\in[m]} \exp\left( \frac{\beta^2}{8m^2}\|\Gamma \delta(\wt L(h,\cdot))\|_{2}^2\right)\text{ by Theorem~\ref{theorem:mgf_kontorovich}}\\
  &\hspace{1cm}= \exp\left( \frac{\beta^2}{8m}\|\Gamma \delta(\wt L(h,\cdot))\|_{2}^2\right)\\
  &\hspace{1cm}\leq \exp\left( \frac{\beta^2}{8m}\|\Gamma \wt\delta\|_{2}^2\right)
\end{align}
\end{subequations}
\endgroup
Denote the shorthand
\begin{equation}
  U = \EE_{\mc{D}_{m}\sim (\mu|\mc{B}^c)^m}\left[\exp\big(\beta(\EE_{Z\sim \mu|\mc{B}^c}[L(h,Z)]-\mc{R}_m(h,\mc{D}_{m}))\big)\right]
\end{equation}
By Markov's inequality it holds
\begin{equation}
  \PP_{\mc{D}_{m}\sim (\mu|\mc{B}^c)^m}\left[ \exp\big(\beta(\EE_{Z\sim \mu|\mc{B}^c}[L(h,Z)]-\mc{R}_m(h,\mc{D}_{m}))\big) \geq \frac{1}{\delta}U \right] \leq \delta
\end{equation}
and combining this with \eqref{eq:pac_triag_proof_mgf} we have
\begin{equation}\label{eq:pac_triag_proof_markov}
  \exp\big(\beta(\EE_{Z\sim \mu|\mc{B}^c}[L(h,Z)]-\mc{R}_m(h,\mc{D}_{m}))\big) \leq \frac{1}{\delta}\exp\left( \frac{\beta^2}{8m}\|\Gamma \wt\delta\|_{2}^2\right)
\end{equation}
with probability at least $1-\delta$ over the sample. Using \eqref{eq:pac_triag_proof_donsker} we thus have
\begin{subequations}
\begin{align}
\mc{R}(\zeta) - \mc{R}_m(\zeta,\mc{D}_{m})
  &\leq \frac{1}{\beta}\Big(\log \EE_{h\sim\pi}\Big[\frac{1}{\delta}\exp\left( \frac{\beta^2}{8m}\|\Gamma \wt\delta\|_{2}^2\right)\Big] + \KL[\zeta:\pi]\Big) + \xi\\
  &= \frac{\beta}{8m}\|\Gamma \wt\delta\|_{2}^2 + \frac{1}{\beta}\Big(\log \frac{1}{\delta} + \KL[\zeta:\pi]\Big) + \xi\label{eq:pac_triag_proof_temp_tuning}
\end{align}
\end{subequations}
Ideally, we would minimize the right hand side with respect to $\beta$. However, this would mean to have $\beta$ depend on $\zeta$ and we thus would not have a uniform bound for all posterior distributions. 

Instead, \cite{London:2016} approaches the problem by defining a sequence of constant $(\delta_j, \beta_j)_{j\in\N_0}$ and bounding the probability that the bound does not hold for any sequence element. Since in the opposite (high-probability) case, the bound holds for all sequence elements, an optimal one can subsequently be chosen dependent on the posterior.

For all $j\in \N_0$, define
\begin{equation}\label{eq:deltaj_betaj_choice}
  \delta_j = \delta 2^{-(j+1)},\qquad \beta_j = 2^j\sqrt{\frac{8m\log \frac{1}{\delta}}{\|\Gamma \wt\delta\|_2^2}}
\end{equation}
which are independent of $\zeta$. Now consider the event $E_j$ that
\begin{equation}\label{eq:pac_triag_proof_Ej}
  \exp\big(\beta_j(\EE_{Z\sim \mu|\mc{B}^c}[\ell(h,Z)]-\mc{R}_m(h,\mc{D}_{m}))\big)
  \geq \frac{1}{\delta_j}\exp\left( \frac{\beta_j^2}{8m}\|\Gamma \wt\delta\|_{2}^2\right)
\end{equation}
By the above argument leading up to \eqref{eq:pac_triag_proof_markov}, the probability for $E_j$ under a random sample of the conditioned data distribution $\mu|\mc{B}^c$ is at most $\delta_j$. Therefore, the probability that any $E_j$ occurs is bounded by
\begin{equation}
  \PP\big(\bigcup_{j\in \N_0} E_j\big) 
  \leq \sum_{j\in \N_0} \PP(E_j) 
  \leq \sum_{j\in \N_0} \delta_j
  = \delta
\end{equation}
Thus, for all posteriors $\zeta$ with probability at least $1-\delta$ none of the events \eqref{eq:pac_triag_proof_Ej} occurs. We may therefore select an index $j$ dependent on $\zeta$ to obtain a sharper risk certificate which still holds with probability at least $1-\delta$ over the sample conditioned on the good set. 
For a fixed posterior $\zeta$, the optimizer of \eqref{eq:pac_triag_proof_temp_tuning} would be
\begin{equation}
  \beta^\ast = \frac{1}{\|\Gamma \wt\delta\|_2} \sqrt{8m(\log \frac{1}{\delta} + \KL[\zeta:\pi])}
\end{equation}
Equating this to \eqref{eq:pac_triag_proof_Ej} and rounding down to the nearest integer gives
\begin{equation}\label{eq:appro\zp_optimal_jast}
  j^\ast = \left\lfloor \frac{1}{2}\log_2\Big(1+\frac{\KL[\zeta:\pi]}{\log \frac{1}{\delta}}\Big)\right\rfloor
\end{equation}
Denote this number before rounding by $r$, i.e. $j^\ast = \lfloor r\rfloor$.
For any real number $r$ it holds $r-1\leq \lfloor r\rfloor \leq r$. Therefore
\begin{equation}
  \frac{1}{2}\sqrt{1+\frac{\KL[\zeta:\pi]}{\log \frac{1}{\delta}}} 
  = 2^{r-1} \leq 2^{j^\ast} \leq 2^{r}
  = \sqrt{1+\frac{\KL[\zeta:\pi]}{\log \frac{1}{\delta}}}
\end{equation}
which gives the following bounds on $u_{j^\ast}$
\begin{equation}\label{eq:pac_triag_proof_uast_bound}
  \frac{1}{2}\sqrt{\frac{8m(\log \frac{1}{\delta}+\KL[\zeta:\pi])}{\|\Gamma \wt\delta\|_2^2}}
  \leq u_{j^\ast}
  \leq \sqrt{\frac{8m(\log \frac{1}{\delta}+\KL[\zeta:\pi])}{\|\Gamma \wt\delta\|_2^2}}
\end{equation}
Likewise, we bound
\begin{subequations}
\begin{align}
  \KL[\zeta:\pi] + \log\frac{1}{\delta_{j^\ast}} 
    &= \KL[\zeta:\pi] + \log \frac{2}{\delta} + j^\ast \log 2\\
    &\leq \KL[\zeta:\pi] + \log \frac{2}{\delta} + \frac{\log 2}{2}\log_2\Big(1+\frac{\KL[\zeta:\pi]}{\log \frac{1}{\delta}}\Big) -\log 2\\
    &= \KL[\zeta:\pi] + \log \frac{1}{\delta} + \frac{1}{2}\log\Big(1+\frac{\KL[\zeta:\pi]}{\log \frac{1}{\delta}}\Big)\\
    &= \KL[\zeta:\pi] + \log \frac{1}{\delta} + \frac{1}{2}\log\Big(\log \frac{1}{\delta}+\KL[\zeta:\pi]\Big)\nonumber\\ 
    &\hspace{5cm}-\frac{1}{2}\log \log \frac{1}{\delta}
\end{align}
\end{subequations}
The assumption $\delta \leq \exp(-e^{-1})$ yields $-\log \log \frac{1}{\delta} \leq 1$ and because $x+1 \leq \exp(x)$ for all $x\in \R$, we find
\begin{subequations}
\begin{align}
  \KL[\zeta:\pi] + \log\frac{1}{\delta_{j^\ast}} 
    &\leq \KL[\zeta:\pi] + \log \frac{1}{\delta} + \frac{1}{2}\Big(\log\big(\log \frac{1}{\delta}+\KL[\zeta:\pi]\big) + 1\Big)\\
    &\leq \KL[\zeta:\pi] + \log \frac{1}{\delta} + \frac{1}{2}\big(\log \frac{1}{\delta} +\KL[\zeta:\pi]\big)\\
    &= \frac{3}{2}\big(\log \frac{1}{\delta} +\KL[\zeta:\pi]\big)\label{eq:pac_triag_proof_kl_delast}
\end{align}
\end{subequations}
We can now use the bounds \eqref{eq:pac_triag_proof_kl_delast} and \eqref{eq:pac_triag_proof_uast_bound} in \eqref{eq:pac_triag_proof_temp_tuning} to bound the expected generalization error
\begin{subequations}
\begin{align}
\mc{R}(\zeta) - \mc{R}_m(\zeta,\mc{D}_{m}) 
  &\leq \frac{u_{j^\ast}}{8m}\|\Gamma \wt\delta\|_{2}^2 + \frac{1}{u_{j^\ast}}\Big(\log \frac{1}{\delta_{j^\ast}} + \KL[\zeta:\pi]\Big) + \xi\\
  &\leq \frac{u_{j^\ast}}{8m}\|\Gamma \wt\delta\|_{2}^2 + \frac{3}{2u_{j^\ast}}\Big(\log \frac{1}{\delta} +\KL[\zeta:\pi]\Big) + \xi\\
  &\leq \frac{1}{2}\|\Gamma \wt\delta\|_{2}\sqrt{\frac{\log \frac{1}{\delta} +\KL[\zeta:\pi]}{2m}} \nonumber\\
  &\qquad+ \frac{3}{2}\|\Gamma \wt\delta\|_2\sqrt{\frac{\log \frac{1}{\delta} +\KL[\zeta:\pi]}{2m}} + \xi\\
  &= 2\|\Gamma \wt\delta\|_2\sqrt{\frac{\log \frac{1}{\delta} +\KL[\zeta:\pi]}{2m}} + \xi
\end{align}
\end{subequations}
Note that $\beta^\ast$ would attain the optimal value
\begin{equation}
\mc{R}(\zeta) - \mc{R}_m(\zeta,\mc{D}_{m}) 
\leq 
\|\Gamma \wt\delta\|_2\sqrt{\frac{\KL[\zeta:\pi] + \log \frac{1}{\delta}}{2m}} + \xi
\end{equation}
which only differs from the above uniform bound by a factor of two.
Finally, recall $\Gamma = \frac{\|\rho\|}{d}D$ where $D$ has entries \eqref{eq:def_D_matrix}.

\section{Additional Lemmata}\label{app:lemmata}

\begin{lemma}\label{lem:conditioned_transport}
Let $T\colon \Omega\to \Omega$ be a measurable function on a measurable space $(\Omega, \Sigma)$ and let $\nu, \mu$ be measures on $\Omega$ with $T_\sharp\nu = \mu$. Let $B\in \Sigma$ be a fixed set with $\mu(B) > 0$ and $A = T^{-1}(B)$ its preimage under $T$. Then
\begin{equation}
	T_\sharp (\nu|A) = \mu|B\ .
\end{equation}
\end{lemma}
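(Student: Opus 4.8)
The plan is to verify the defining equation of a pushforward measure directly: for an arbitrary measurable set $C \in \Sigma$, I want to show that $T_\sharp(\nu|A)(C) = (\mu|B)(C)$. Unwinding the definitions, the left-hand side is $(\nu|A)(T^{-1}(C))$, which by definition of the conditional (restricted and renormalized) measure equals $\nu(T^{-1}(C)\cap A)/\nu(A)$. The right-hand side is $\mu(C\cap B)/\mu(B)$. So the whole statement reduces to proving the two identities $\nu(A) = \mu(B)$ and $\nu(T^{-1}(C)\cap A) = \mu(C\cap B)$ for every measurable $C$.

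First I would observe that $\nu(A) = \nu(T^{-1}(B)) = (T_\sharp\nu)(B) = \mu(B)$, which is immediate from $T_\sharp\nu = \mu$ and also confirms that the normalization $\nu(A) = \mu(B) > 0$ is finite and nonzero, so $\nu|A$ is well-defined. For the numerator, the key set-theoretic fact is that $T^{-1}(C)\cap A = T^{-1}(C)\cap T^{-1}(B) = T^{-1}(C\cap B)$, since preimages commute with intersections. Hence $\nu(T^{-1}(C)\cap A) = \nu(T^{-1}(C\cap B)) = (T_\sharp\nu)(C\cap B) = \mu(C\cap B)$, again using $T_\sharp\nu = \mu$. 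Dividing the two displays gives $T_\sharp(\nu|A)(C) = (\mu|B)(C)$, and since $C$ was arbitrary the two measures coincide.

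There is no real obstacle here; the proof is essentially a one-line manipulation of preimages together with the definition of pushforward. The only point requiring minor care is the well-definedness of the conditional measures $\nu|A$ and $\mu|B$ — that is, checking $0 < \nu(A) < \infty$ — but this is handled by the identity $\nu(A) = \mu(B)$ and the hypothesis $\mu(B) > 0$ (with $\mu$ a probability measure, so finiteness is automatic). I would write the argument as a short chain of equalities for a fixed measurable $C$, flagging the identity $T^{-1}(C\cap B) = T^{-1}(C)\cap A$ as the single substantive step.
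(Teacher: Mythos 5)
Your proof is correct and follows essentially the same route as the paper: fix an arbitrary measurable set, unwind the definitions of pushforward and conditional measure, and reduce to the set-theoretic identity $T^{-1}(C)\cap T^{-1}(B) = T^{-1}(C\cap B)$ together with $T_\sharp\nu = \mu$. Your additional remark that $\nu(A) = \mu(B)$ makes explicit a normalization that the paper leaves implicit, but the argument is otherwise identical.
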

\begin{proof}
Let $S\in \Sigma$ be arbitrary and let $\wt\mu = T_\sharp(\nu|A)$. Then
\begin{equation}
	\wt \mu(S) = (\nu|A)(T^{-1}(S)) = \frac{\nu(T^{-1}(S) \cap A)}{\nu(A)}
\end{equation}
as well as
\begin{equation}
	(\mu|B)(S) = \frac{\mu(S\cap B)}{\mu(B)} = \frac{\nu(T^{-1}(S\cap B))}{\nu(A)}
\end{equation}
Note that
\begin{subequations}
\begin{align}
z\in T^{-1}(S) \cap T^{-1}(B)
\,&\Leftrightarrow\, T(z) \in S\, \land \,T(z) \in B\\
\,&\Leftrightarrow\, T(z) \in S \cap B\\
\,&\Leftrightarrow\, z \in T^{-1}(S \cap B)
\end{align}
\end{subequations}
thus $T^{-1}(S) \cap T^{-1}(B) = T^{-1}(S \cap B)$ and consequently $\wt \mu(S) = (\mu|B)(S)$. Since $S$ was arbitrary, this shows the assertion.
\end{proof}

The following theorem exists in various forms in the literature. To make this paper self-contained, we recite the version in \cite{Kontorovich:2017} which is used to bound moment-generating functions in Proposition~\ref{theorem:mgf_kontorovich}. Note that we only use the MGF bound \eqref{eq:azuma_hoeffding} in our analysis. However, the concentration inequality \eqref{eq:azuma_hoeffding_concentration} also holds analogously under the assumptions of Proposition~\ref{theorem:mgf_kontorovich} which may be of independent interest.

\begin{theorem}[\textbf{Azuma-Hoeffding \cite[Theorem~4.1]{Kontorovich:2017}}]\label{theorem:azuma-hoeffding}
Let $(M^{(i)})_{i \in [m]}$ be a martingale difference sequence with respect to a filtration $(\Sigma_i)_{i\in [m]}$ of sigma algebras. Suppose that for each $i\in [m]$ there exist $\Sigma_{i-1}$-measurable random variables $A^{(i)}$, $B^{(i)}$ such that $A^{(i)} \leq M^{(i)}\leq B^{(i)}$ almost surely. Then for all $\lambda\in \R$ it holds that
\begin{equation}\label{eq:azuma_hoeffding}
  \EE\Big[ \exp\big(\lambda \sum_{i\in [m]} M^{(i)}\big)\Big] 
    \leq \exp\Big(\frac{\lambda^2}{8} \sum_{i\in [m]}\|B^{(i)}-A^{(i)}\|_\infty^2\Big)
\end{equation}
and consequently, for any $t \geq 0$
\begin{equation}\label{eq:azuma_hoeffding_concentration}
  \PP\Big(\Big|\sum_{i\in [m]}M^{(i)}\Big| \geq t\Big) \leq 2\exp\Big(-\frac{2t^2}{\sum_{i\in [m]} \|B^{(i)}-A^{(i)}\|_\infty^2}\Big)\ .
\end{equation}
\end{theorem}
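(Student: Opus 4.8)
The plan is to prove this by the classical Cram\'er--Chernoff (exponential moment) method, the only wrinkle being that the enveloping random variables $A^{(i)},B^{(i)}$ are permitted to be $\Sigma_{i-1}$-measurable rather than constant. The backbone is a \emph{conditional Hoeffding lemma}: if $X$ satisfies $\EE[X\mid\Sigma']=0$ and $a\leq X\leq b$ almost surely for $\Sigma'$-measurable $a,b$, then $\EE[\exp(\lambda X)\mid\Sigma']\leq\exp(\lambda^2(b-a)^2/8)$ for all $\lambda\in\R$. I would obtain this by running the deterministic Hoeffding argument pointwise in the conditioning variable: write $X$ as the convex combination $X=\frac{b-X}{b-a}a+\frac{X-a}{b-a}b$, apply convexity of $t\mapsto e^{\lambda t}$, take conditional expectations using the vanishing conditional mean, and then bound the resulting function $h\mapsto -ph+\log(1-p+pe^{h})$ (with $p=-a/(b-a)$ and $h=\lambda(b-a)$) by $h^2/8$ via a second-order Taylor expansion, using that its second derivative never exceeds $1/4$.

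With that in hand I would peel the differences off one at a time. Writing $S_k=\sum_{i\in[k]}M^{(i)}$, note that $S_{k-1}$ is $\Sigma_{k-1}$-measurable, $\EE[M^{(k)}\mid\Sigma_{k-1}]=0$, and $M^{(k)}$ lies between the $\Sigma_{k-1}$-measurable variables $A^{(k)},B^{(k)}$. Hence the conditional Hoeffding lemma gives
\begin{align}
  \EE\big[\exp(\lambda S_k)\mid\Sigma_{k-1}\big]
  &=\exp(\lambda S_{k-1})\,\EE\big[\exp(\lambda M^{(k)})\mid\Sigma_{k-1}\big]\nonumber\\
  &\leq\exp(\lambda S_{k-1})\exp\!\Big(\tfrac{\lambda^2}{8}\|B^{(k)}-A^{(k)}\|_\infty^2\Big),
\end{align}
where I used $(B^{(k)}-A^{(k)})^2\leq\|B^{(k)}-A^{(k)}\|_\infty^2$ almost surely. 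Taking full expectations and iterating over $k=m,m-1,\dots,1$ telescopes to the MGF bound \eqref{eq:azuma_hoeffding}.

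For the tail bound \eqref{eq:azuma_hoeffding_concentration} I would set $C=\sum_{i\in[m]}\|B^{(i)}-A^{(i)}\|_\infty^2$ and use Markov's inequality on $e^{\lambda S_m}$ together with \eqref{eq:azuma_hoeffding}: for $\lambda>0$, $\PP(S_m\geq t)\leq\exp(-\lambda t+\lambda^2 C/8)$, and optimizing the exponent at $\lambda=4t/C$ yields $\PP(S_m\geq t)\leq\exp(-2t^2/C)$. Applying the same reasoning to $-S_m$, whose difference sequence $-M^{(k)}$ is sandwiched by $-B^{(k)}\leq -M^{(k)}\leq -A^{(k)}$ so that $C$ is unchanged, and then a union bound over the two tails, gives the claimed factor $2$.

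The main obstacle I anticipate is purely in the conditional Hoeffding step: one has to be careful that measurability of $A^{(i)},B^{(i)}$ with respect to $\Sigma_{i-1}$ is exactly what makes the conditional argument legitimate (given the past, $M^{(i)}$ really is a mean-zero variable confined to a fixed interval), and that the pointwise deterministic inequality can be lifted to an almost-sure inequality between the relevant conditional expectations. Everything downstream of that is routine Chernoff bookkeeping; since the statement is recited verbatim from \cite[Theorem~4.1]{Kontorovich:2017}, one could alternatively simply cite it.
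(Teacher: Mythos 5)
The paper does not actually prove this statement: Theorem~\ref{theorem:azuma-hoeffding} appears in Appendix~B only as a recitation of \cite[Theorem~4.1]{Kontorovich:2017} included to keep the paper self-contained, so there is no internal proof to compare against. Your argument is the standard and correct one: the conditional Hoeffding lemma (Hoeffding's lemma run pointwise in the conditioning, which is legitimate precisely because $A^{(i)},B^{(i)}$ are $\Sigma_{i-1}$-measurable and $\EE[M^{(i)}\mid\Sigma_{i-1}]=0$), followed by the telescoping tower-property peel-off using that $S_{k-1}$ is $\Sigma_{k-1}$-measurable, then Markov/Chernoff with $\lambda=4t/C$ and a union bound over the two tails. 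One small bookkeeping point worth stating explicitly if you write this out in full: after applying the conditional Hoeffding lemma you obtain a bound by $\exp(\lambda^2(B^{(k)}-A^{(k)})^2/8)$, which is still a random variable; you then dominate it almost surely by the constant $\exp(\lambda^2\|B^{(k)}-A^{(k)}\|_\infty^2/8)$ before taking the outer expectation, exactly as you indicate. With that made explicit the proof is complete, and as you note, since the paper itself merely cites Kontorovich one could equally well do the same.
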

\section{Numerical Toy Example}\label{app:toy_example}

\begin{figure}[ht]
\centering
\includegraphics[width=0.7\textwidth]{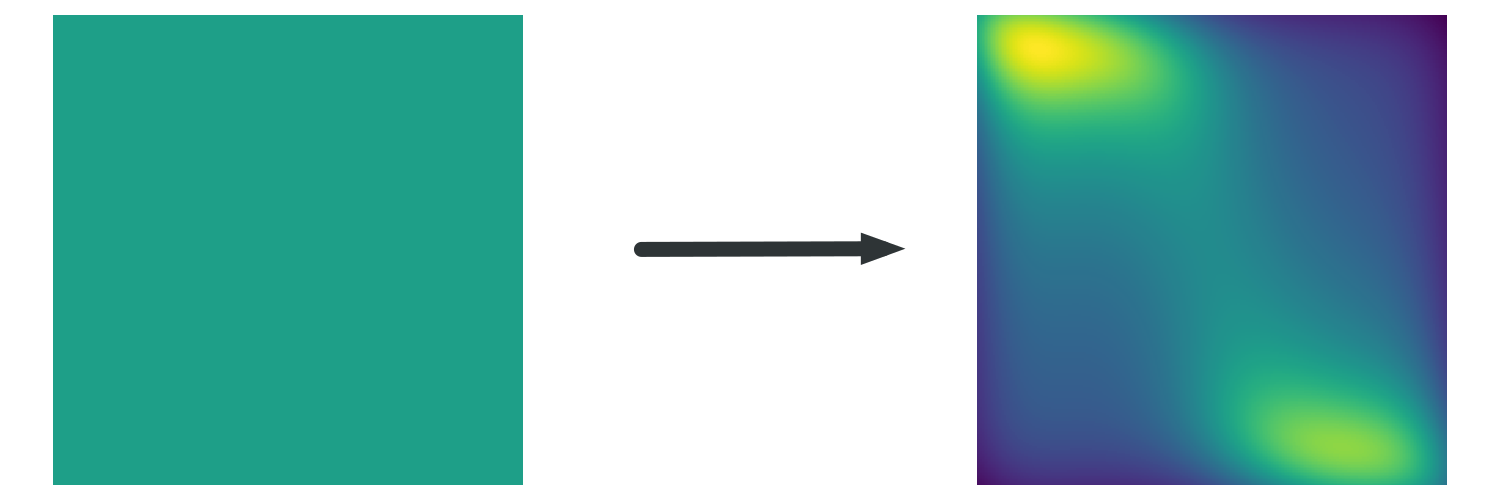}
\caption{KR rearrangement $T(z)=(T_{1}(z_{1},z_{2}),T_{2}(z_{1},z_{2}))^{\top}$ transports a uniform reference measure $\nu^{2}$ on the unit cube $[0,1]^2$ to a multimodal distribution $\mu$ (right).}\label{fig:transport}
\end{figure}

\begin{figure}[ht]
\centering
\includegraphics[width=\textwidth]{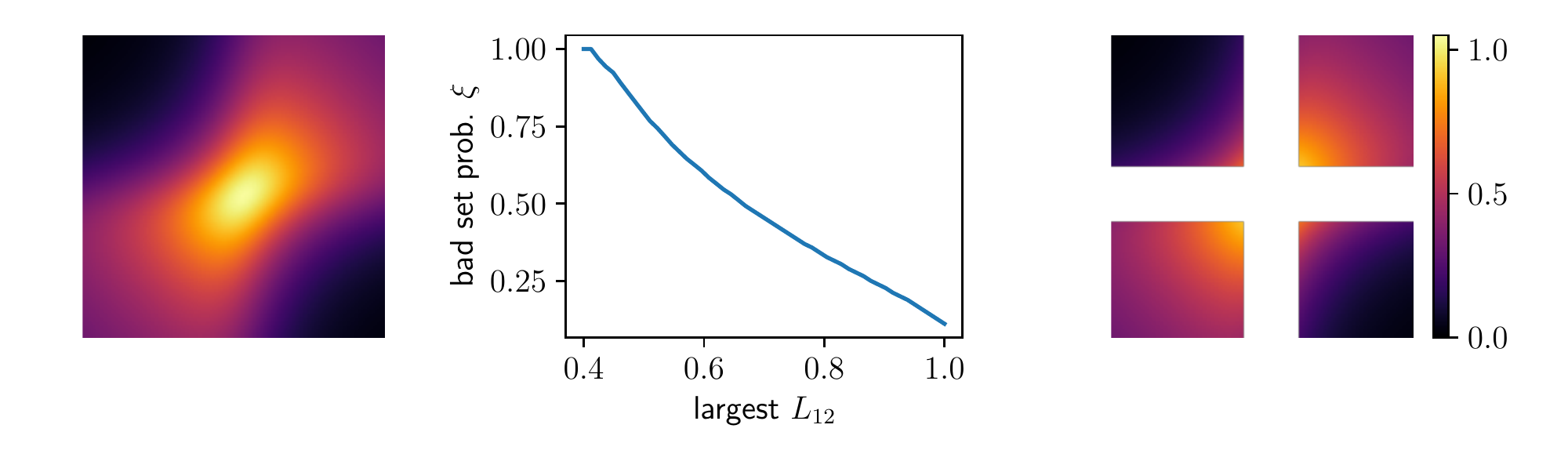}
\caption{Construction of a good set in toy example. \emph{Left}: $L_{12}$ for all inputs. \emph{Center}: size of an excluded bad set under the data distribution for corresponding largest values of $L_{12}$. \emph{Right}: $L_{12}$ for good inputs.}\label{fig:good_set}
\end{figure}

To illustrate the concept of transport stability discussed in Proposition~\ref{prop:local_osciallation}, as well as the \emph{bad set} construction of Section~\ref{sec:bad_set_bound}, we devised a toy example of two-dimensional transport. As a reference distribution, we choose the uniform distribution on the unit cube $[0,1]^2$, i.e. $\mc{Z} = [0,1]$. The metric $\rho^d,\,d=2$ is chosen as the $\ell^{1}$-distance $\rho^{2}(z, z') = \|z-z'\|_{1}=|z_{1}-z_{1}'|+|z_{2}-z_{2}'|$. The KR transport map has component functions $T(z_1, z_2) = \big(T_1(z_1), T_2(z_1, z_2)\big)$. We make a polynomial ansatz, defining
\begin{equation}\label{eq:Tdef}
  T_1(z_1) = \int_{[0,z_1]} \sum_{i\in [5]} \beta_i B_i^5(\tau) \dd \tau,\qquad
  T_2(z_1, z_2) = \int_{[0,z_2]} \sum_{i\in [2]} \wt\beta_i(z_1) B_i^2(\tau) \dd \tau
\end{equation}
where $B_i^m$ denotes the $i$-th Bernstein polynomial of degree $m$, $\beta_i \geq 0$ are parameters and (positive) coefficient functions $\wt\beta_i(z_1)$ are again a parameterized combination of Bernstein polynomials (degree 8).
Because Bernstein polynomials assume non-negative values on $[0,1]$, the components in \eqref{eq:Tdef} define a valid KR-rearrangement.
The resulting measure transport is illustrated in Figure~\ref{fig:transport}.
In order to evaluate the bound of Theorem~\ref{theorem:pac_triag_transport}, we need to compute the Lipschitz constant $L_{12}$ according to equation~\eqref{eq:triag_transport_stability}
\begin{equation}\label{eq:16_2d}
  \EE_{\tau\sim \nu_2} [|T_2(z_1,\tau) - T_2(z_1',\tau)|] \leq L_{12}|z_1-z_1'|
\end{equation}
for all good inputs $z_1, z_1'$. Figure~\ref{fig:good_set} (left) shows the values of $L_{12}$ satisfying \eqref{eq:16_2d} for each input $(z_1, z_1')\in [0,1]^2$. The sought Lipschitz constant is the largest of these values. In order to avoid regions with large values, Theorem~\ref{theorem:pac_triag_transport} allows for the exclusion of \emph{bad inputs}. The probability $\xi$ of this bad set under the data distribution is incurred as a penalty in equation~\eqref{eq:pac_risk_bound}. We thus construct bad sets to exclude large values and consider their probability under the data distribution Figure~\ref{fig:good_set} (middle). Finally, the exclusion of such a bad set is shown on the right of Figure~\ref{fig:good_set}.
\end{appendix}

\end{document}